\def\eqref#1{equation~\ref{#1}}
\def\1{\bm{1}}
\DeclareMathAlphabet{\mathsfit}{\encodingdefault}{\sfdefault}{m}{sl}
\SetMathAlphabet{\mathsfit}{bold}{\encodingdefault}{\sfdefault}{bx}{n}
\DeclareMathOperator*{\argmax}{arg\,max}
\DeclareMathOperator*{\argmin}{arg\,min}
\DeclareMathOperator{\sign}{sign}
\pgfplotsset{compat=newest}
\newtheorem{theorem}{Theorem}
\newtheorem*{theorem*}{Theorem}
\newtheorem{proposition}[theorem]{Proposition}
\newtheorem*{proposition*}{Proposition}
\newtheorem{lemma}[theorem]{Lemma}
\newtheorem*{lemma*}{Lemma}
\newtheorem{definition}[theorem]{Definition}
\newtheorem{assumption}[theorem]{Assumption}
\newtheorem*{remark*}{Remark}
\newcommand{\algoName}{{\texttt{DyART}}\xspace}
\newcommand{\algoNameFull}{Dynamics-Aware Robust Training\xspace}
\title{Exploring and Exploiting Decision Boundary Dynamics for Adversarial Robustness}
\author{
Yuancheng Xu\textsuperscript{$\dag$} 
\quad
Yanchao Sun\textsuperscript{$\dag$} 
\quad
Micah Goldblum\textsuperscript{$\ddag$} 
\quad
Tom Goldstein\textsuperscript{$\dag$}
\quad
Furong Huang\textsuperscript{$\dag$}  \\
\textsuperscript{$\dag$} \text{University of Maryland, College Park} 
\qquad
\textsuperscript{$\ddag$} New York University \\
\texttt{\textsuperscript{$\dag$}\{ycxu,ycs,tomg,furongh\}@umd.edu}
\qquad
\texttt{\textsuperscript{$\ddag$}goldblum@nyu.edu}
}
\begin{document}

\maketitle

\begin{abstract}
The robustness of a deep classifier can be characterized by its \textit{margins}:
the decision boundary's distances to natural data points. However, it is unclear whether existing robust training methods effectively increase the margin for each vulnerable point during training. 
To understand this, we propose a continuous-time framework for quantifying the relative speed of the decision boundary with respect to each individual point. 
Through visualizing the moving speed of the decision boundary under Adversarial Training, one of the most effective robust training algorithms, a surprising moving-behavior is revealed: the decision boundary moves away from some vulnerable points but simultaneously moves closer to others, decreasing their margins. 
To alleviate these \textit{conflicting dynamics} of the decision boundary, we propose \textit{\algoNameFull} (\algoName), which encourages the decision boundary to engage in movement that prioritizes increasing smaller margins.
In contrast to prior works, \algoName directly operates on the margins rather than their indirect approximations, allowing for more targeted and effective robustness improvement. 
Experiments on the CIFAR-10 and Tiny-ImageNet datasets verify that \algoName alleviates the conflicting dynamics of the decision boundary and obtains improved robustness under various perturbation sizes compared to the state-of-the-art defenses.
Our code is available at \url{https://github.com/Yuancheng-Xu/Dynamics-Aware-Robust-Training}.

\end{abstract}

\section{Introduction}\label{sec:intro}

Deep neural networks have exhibited impressive performance in a wide range of applications~\citep{krizhevsky2012imagenet,goodfellow2014generative,he2016deep}. 
However, they have also been shown to be susceptible to adversarial examples, leading to issues in security-critical applications such as autonomous driving and medicine~\citep{szegedy2013intriguing,nguyen2015deep}.  
To alleviate this problem, adversarial training (AT)~\citep{madry2017towards,shafahi2019adversarial,zhang2019theoretically,gowal2020uncovering} was proposed and is one of the most prevalent methods against adversarial attacks. 
Specifically, AT aims to find the worst-case adversarial examples based on some surrogate loss and adds them to the training dataset in order to improve robustness. 

Despite the success of AT, it has been shown that over-parameterized neural networks
still have insufficient model capacity for fitting adversarial training data, partly because AT does not consider the vulnerability difference among data points~\citep{zhang2021geometryaware}. The vulnerability of a data point can be measured by its margin: its distance to the decision boundary. As depicted in Figure~\ref{sfig:demo1}, some data points have smaller margins
and are thus more vulnerable to attacks. Since AT does not directly operate on the margins and it uses a pre-defined perturbation bound for all data points regardless of their vulnerability difference, it is unclear whether the learning algorithm can effectively increase the margin for each vulnerable point. 
Geometrically, we would like to know if the decision boundary moves away from the data points, especially the vulnerable ones. 
As illustrated in Figure~\ref{sfig:demo2}, there can exist conflicting dynamics of the decision boundary:  it moves away from some vulnerable points but simultaneously moves closer to other vulnerable ones during training. This motivates us to ask:

\textbf{Question 1}\quad \textit{Given a training algorithm, how can we analyze the dynamics of the decision boundary with respect to the data points?}

\begin{figure}[!t]
    \centering
    \begin{subfigure}[t]{0.33\textwidth}
        \includegraphics[width=\textwidth]{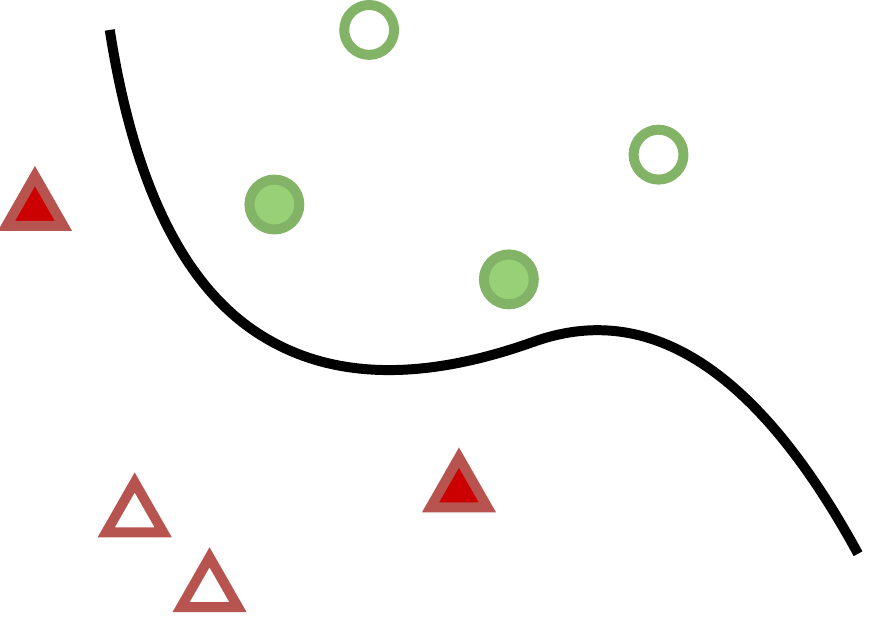}
        \caption{The decision boundary, vulnerable (solid) and robust (hollow) points.}
    \label{sfig:demo1}
    \end{subfigure}
    \hfill
    \begin{subfigure}[t]{0.32\textwidth}
        \includegraphics[width=\textwidth]{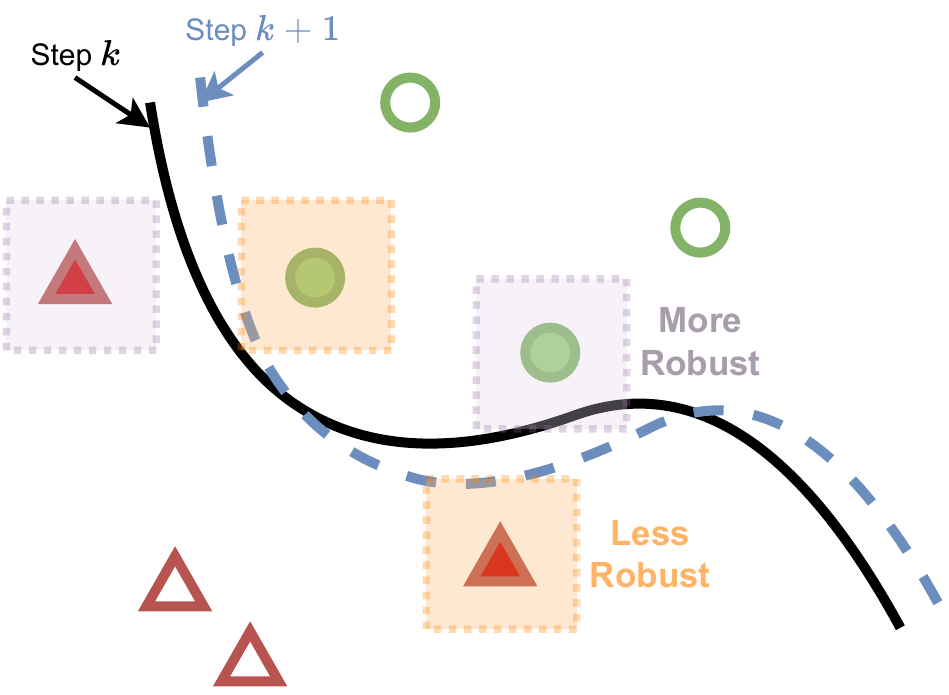}
        \caption{An update with conflicting impacts on robustness.}
    \label{sfig:demo2}
    \end{subfigure}
    \hfill
    \begin{subfigure}[t]{0.33\textwidth}
        \includegraphics[width=\textwidth]{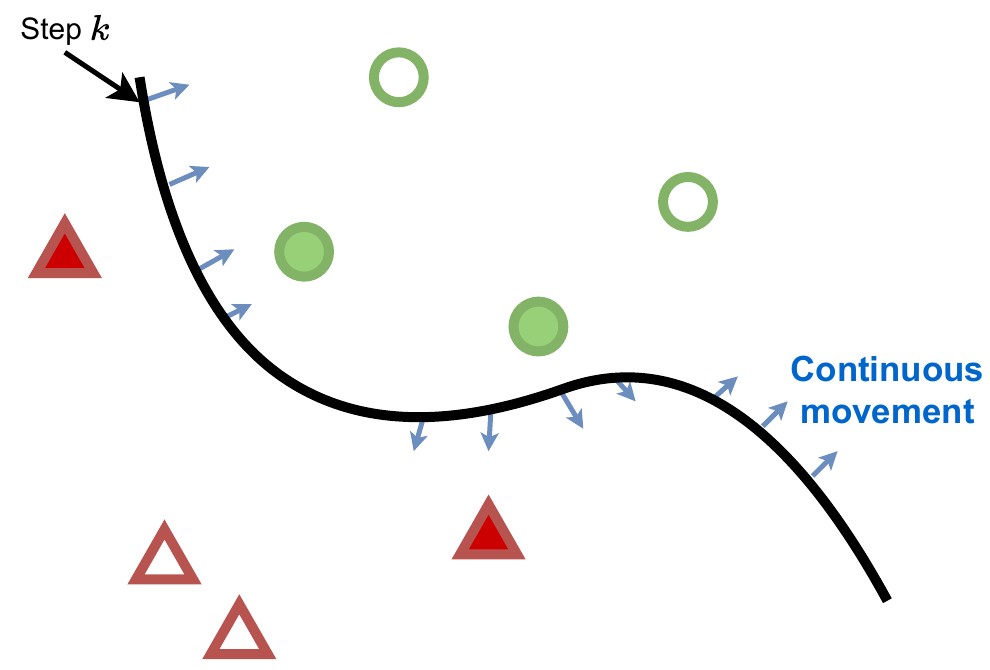}
        \caption{Continuous movement of the decision boundary.}
    \label{sfig:demo3}
    \end{subfigure}
    \caption{The movement of the decision boundary. Red triangles and green circles are data points from two classes. Figure~\ref{sfig:demo1} shows the vulnerability difference among the data points: some are closer to the decision boundary, whereas others are farther from it. In Figure~\ref{sfig:demo2}, the decision boundary after an update 
    moves away from some vulnerable points (made more robust) but simultaneously moves closer to other vulnerable ones (made less robust). 
    Figure~\ref{sfig:demo3} describes the continuous movement of the decision boundary in Figure~\ref{sfig:demo2}.}
    \label{fig:demos}
    \vspace{-1.5em}
\end{figure}

To answer the above question, we propose a continuous-time framework that quantifies the instantaneous movement of the decision boundary as shown in Figure~\ref{sfig:demo3}. 
Specifically, we define the relative speed of the decision boundary w.r.t. a point to be the time derivative of its margin, which can be interpreted as the speed of its closest adversarial example moving away from it. 
We show that the speed can be derived from the training algorithm using a closed-form expression. 

Using the proposed framework, we empirically compute the speed of the decision boundary w.r.t. data points for AT. 
As will be shown in Figure~\ref{fig: AT_VR_scatter_intepretation}, the aforementioned conflicting dynamics of the decision boundary (Figure~\ref{sfig:demo2},\ref{sfig:demo3}) is revealed: the decision boundary  moves \emph{towards} many vulnerable points during training and decrease their margins, directly counteracting the objective of robust training.
The desirable dynamics of the decision boundary, on the other hand, should increase the margins of all vulnerable points. 
This leads to another question:

\textbf{Question 2}\quad \textit{How to design algorithms that encourage the decision boundary to engage in movements that increase margins for vulnerable points, and not decrease them?}

To this end, we propose \textit{\algoNameFull (\algoName)}, which prioritizes moving the decision boundary away from more vulnerable points and increasing their margins. 
Specifically, \algoName directly operates on margins of training data and carefully designs its cost function on margins for more desirable dynamics. 
Note that directly optimizing margins in the input space is technically challenging since it was previously unclear how to compute the gradient of the margin.
In this work, we derive the closed-form expression for the gradient of the margin and present an efficient algorithm to compute it, making gradient descent viable for \algoName. 
In addition, since \algoName directly operates on margins instead of using a pre-defined uniform perturbation bound for training as in AT,  \algoName is naturally robust for a wide range of perturbation sizes $\epsilon$. 
Experimentally, we demonstrate that \algoName mitigates the conflicting dynamics of the decision boundary and achieves improved robustness performance on diverse attacking budgets.

\textbf{Summary of contributions.} \textbf{(1)} We propose a continuous-time framework to study the relative speed of the decision boundary w.r.t. each individual data point and provide a closed-form expression for the speed. 
\textbf{(2)} We visualize the speed of the decision boundary for AT and identify the conflicting dynamics of the decision boundary. 
\textbf{(3)} We present a close-form expression for the gradient of the margin, allowing for direct manipulation of the margin.
\textbf{(4)} We introduce an efficient alternative to compute the margin gradient by replacing the margin with our proposed \emph{soft margin}, a lower bound of the margin whose approximation gap is controllable. 
\textbf{(5)} We propose \algoNameFull (\algoName), which alleviates the conflicting dynamics by carefully designing a cost function on soft margins to prioritize increasing smaller margins. Experiments show that \algoName obtains improved robustness over state-of-the-art defenses on various perturbation sizes.

\section{Related Work}\label{sec:related}

\textbf{Decision boundary analysis.}\quad
Prior works on decision boundary of deep classifiers have studied the small margins in adversarial directions \citep{karimi2019characterizing}, the topology of classification regions \citep{fawzi2018empirical}, the relationship between dataset features and margins \citep{ortiz2020hold} and improved robust training by decreasing the unwarranted increase in the margin along adversarial directions \citep{rade2022reducing}. 
While these works study the static decision boundary of trained models, our work focuses on the decision boundary dynamics during training. 

\textbf{Weighted adversarial training.}\quad 
Adversarial training and its variants \citep{madry2017towards,zhang2019theoretically,wang2019improving,zhang2020attacks} have been proposed to alleviate the adversarial vulnerability of deep learning models. To better utilize the model capacity, weighted adversarial training methods are proposed \citep{zeng2020adversarial,liu2021probabilistic,zhang2021geometryaware} aiming to assign larger weights to more vulnerable points closer to the decision boundary. However, these methods rely on indirect approximations of margins that are not optimal. 
For example, GAIRAT \citep{zhang2021geometryaware} uses the least number of iterations needed to flip the label of an clean example as an surrogate to its margin, which is shown to be 
likely to make wrong judgement on the robustness \citep{liu2021probabilistic}. As another approximation, the logit margin \citep{liu2021probabilistic,zeng2020adversarial} is used but larger logit margin values do not necessarily correspond to larger margins. 
In contrast, our proposed \algoName directly uses margins to characterize the vulnerability of data points.

\textbf{Margin maximization.}\quad 
Increasing the distance between the decision boundary and data points has been discussed in prior works. The authors of \citet{elsayed2018large} propose to maximize the first order Taylor's expansion approximation 
of the margin  at the clean data point, which is inaccurate and computationally prohibitive since it requires computing the Hessian of the classifier. The authors of \citet{Atzmon2019control} propose to maximize the distance between each data point and some point on the decision boundary, which is not the closest one and thus does not increase the margin directly. 
MMA~\citep{Ding2020MMA} uses the uniform average of cross-entropy loss on the closest adversarial examples as the objective function, indirectly increasing the average margins. 
All of these methods maximize the average margin indirectly and do not consider the vulnerability differences among points. 
In contrast, our proposed \algoName will utilize our derived closed-form expression for margin gradient to directly operate on margins and moreover, prioritize increasing smaller margins.

\section{Preliminaries and Notations} \label{sec:prelim}
\textbf{Data and model.} \quad We consider a classification task with inputs $x \in \mathcal{X}$ and corresponding labels $y\in\mathcal{Y} = \{1,2,...,K\}$.
A deep classifier parameterized by $ \theta $ is denoted by $f_{\theta}(x) = \argmax_{c\in\mathcal{Y}}z_{\theta}^c(x)$ where $z_{\theta}^c(x)$ is the logit for class $c$. 

\textbf{Decision boundary.} \quad Denote the logit margin for class $ y $ as follows: 

\vspace{-1.0em}

\begin{equation}\label{eq: logit_margin}
    \phi^{y}_{\theta}(x) = z_{\theta}^y(x) - \max_{y'\neq y} z_{\theta}^{y'}(x)
\end{equation}
\vspace{-1.0em}

In this paper, we will use $\phi^{y}_\theta(x)$ and $\phi^{y}(x,\theta)$ interchangeably. Observe that $ x $ is classified as  $ y $ if and only if $ \phi^{y}_{\theta}(x) > 0 $. Therefore,  the zero level set of $ \phi^{y}_{\theta}(x) $, denoted by $ \Gamma_y=\{x: \phi^{y}_{\theta}(x) = 0\} $, is the decision boundary for class $ y $. 
For a correctly classified point $(x,y)$, its margin $R_\theta(x)$ is defined to be the distance from $x$ to the decision boundary for class $y$. That is,
\begin{equation} \label{eq: robustness_radius}
    R_\theta(x) = \min_{\hat{x}} \|\hat{x}-x\|_p \quad \textrm{s.t.}  \quad \phi_\theta^y(\hat{x}) = 0
\end{equation}
where $\|\cdot\|_p$ is the $\ell_p$ norm with $1\leq p \leq \infty$.

\textbf{Difference between logit margin and margin.}\quad  The logit margin $\phi^{y}_{\theta}(x)$ is the gap between the logits values that are \emph{output} by the neural network. On the other hand, the margin $R_\theta(x)$ is the distance from the data point to the decision boundary, and is measured in the \emph{input} space $\mathcal{X}$. 

\textbf{Continuous-time formulation.} \quad To study the instantaneous movement of the decision boundary in Section~\ref{sec: dynamics_of_bdr}, we will use the continuous-time formulation for the optimization on the parameters $\theta$, denoted as $\theta(t)$. 
Let $\theta'(t)$ be the continuous-time description of the update rule of the model parameters. When using gradient descent on a loss function $L$, we have $\theta'(t) = -\nabla_\theta L(\theta(t))$. 

\section{Dynamics of the decision boundary} \label{sec: dynamics_of_bdr}
In this section, we will study the dynamics of the decision boundary in continuous time. We first define its speed w.r.t. each data point, and then provide a closed-form expression for it.
Finally, we visualize the speed of the decision boundary under Adversarial Training and analyze its dynamics. 

\subsection{Speed of the decision boundary}
Consider a correctly classified clean example $(x_i,y_i)$. Our goal is to capture the movement of the decision boundary $\Gamma_{y_i} (t) = \{x:\phi^{y_i}(x,\theta(t)) = 0\}$ w.r.t. $x_i$ as $t$ varies continuously. To this end, we consider the curve of the closest boundary point $\hat{x}_i(t)$ on $\Gamma_{y_i} (t)$ to $x_i$:

\begin{definition}[Curve of the closest boundary point $\hat{x}_i(\cdot)$] \label{def: closest_pt}
Suppose that $(x_i,y_i)$ is correctly classified by $f_{\theta(t)}$ in some time interval $I$. Define the curve of the closest boundary point $\hat{x}_i(\cdot): I \to \mathcal{X}$ as
\begin{equation} \label{eq: closest_pt}
    \hat{x}_i(t) = \argmin\nolimits_{\hat{x}} \|\hat{x}-x_i\|_p \quad \textrm{s.t.}  \quad \phi^{y_i}(\hat{x},\theta(t)) = 0.
\end{equation}
Define the margin of $x_i$ at time $t$ to be $R(x_i,t) = \|\hat{x}_i(t)-x_i\|_p $. 
\end{definition}

\begin{wrapfigure}{r}{0.3\textwidth}
\vspace{-2.5em}
    \centering
    \includegraphics[width=0.3\columnwidth]{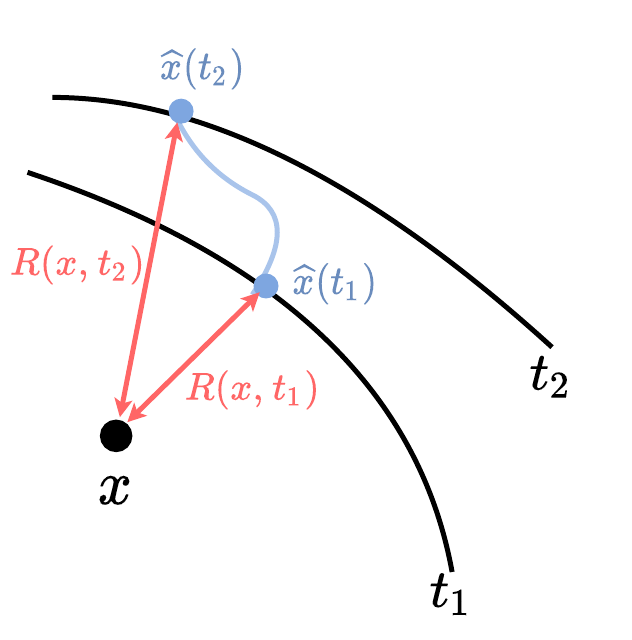}
    \vspace{-1.5em}
    \caption{The curve of the closest boundary point $\hat{x}(t)$ (in blue) of the data point $x$.}
    \label{fig:curve}
\vspace{-2em}
\end{wrapfigure}

An example of the curve of the closest boundary point is depicted in Figure~\ref{fig:curve}.
To understand how the distance between the decision boundary $\Gamma_{y_i} (t)$ and $x_i$ changes, it suffices to focus on the curve of the closest boundary point $\hat{x}_i(t)$. We define the speed of the decision boundary to be the time derivative of the margin as follows:

\begin{definition}[Speed of the decision boundary $s(x_i,t)$] \label{def: speed_of_bdr}
Under the setting of definition \ref{def: closest_pt}, define the speed of the decision boundary w.r.t. $x_i$ as $s(x_i,t) = \frac{d}{dt} R(x_i,t) = \frac{d}{dt}  \|\hat{x}_i(t)-x_i\|_p$.
\end{definition}

Note that the speed $s(x_i,t) > 0$ means that the robustness is improving for $x_i$ at time $t$, which is desirable during robust training. The following proposition gives a closed-form expression for the speed, given a training algorithm $\theta'(t)$.

\begin{proposition}[Closed-form expression of the speed $s(x_i,t)$]\label{prop: speed_expression} Let $\hat{x}_i(t)$ be the curve of the closest boundary point w.r.t. $x_i$. For $1 \leq p \leq \infty$, the speed of decision boundary w.r.t. $x_i$ under $\ell_p$ norm is
\begin{equation} \label{eq: expression_for_speed}
    s(x_i,t) = \frac{1}{\|\nabla_x \phi^{y_i}(\hat{x}_i(t),\theta(t))\|_q} \nabla_\theta \phi^{y_i} (\hat{x}_i(t),\theta(t)) \cdot \theta'(t)
\end{equation}
where $q$ satisfies that $1/q+1/p=1$. In particular, 
$q=1$ when $p=\infty$.
\end{proposition}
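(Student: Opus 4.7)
The plan is to differentiate $R(x_i,t)=\|\hat{x}_i(t)-x_i\|_p$ directly with respect to $t$, and then eliminate the unknown velocity $\hat{x}_i'(t)$ of the closest-boundary-point curve using two pieces of information: (i) the implicit constraint that $\hat{x}_i(t)$ remains on the decision boundary for every $t$, and (ii) the first-order Lagrange condition characterizing $\hat{x}_i(t)$ as the closest boundary point to $x_i$. Both ingredients feed into the chain-rule expansion of the margin's time derivative.

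\textbf{Two key identities.} First I would apply the chain rule to the constraint $\phi^{y_i}(\hat{x}_i(t),\theta(t)) = 0$ to obtain
\begin{equation} \label{eq:plan-constraint}
\nabla_x \phi^{y_i}(\hat{x}_i(t),\theta(t)) \cdot \hat{x}_i'(t) \;+\; \nabla_\theta \phi^{y_i}(\hat{x}_i(t),\theta(t)) \cdot \theta'(t) \;=\; 0.
\end{equation}
Second, since $\hat{x}_i(t)$ minimizes $\|\,\cdot\,-x_i\|_p$ subject to $\phi^{y_i}(\,\cdot\,,\theta(t))=0$, the Lagrange condition gives a scalar multiplier $\lambda(t)$ with
\begin{equation} \label{eq:plan-KKT}
\nabla_v\|v\|_p\big|_{v=\hat{x}_i(t)-x_i} \;=\; -\lambda(t)\,\nabla_x\phi^{y_i}(\hat{x}_i(t),\theta(t)).
\end{equation}
Differentiating $R(x_i,t)$ directly yields $s(x_i,t) = \nabla_v\|v\|_p|_{v=\hat{x}_i(t)-x_i}\cdot\hat{x}_i'(t)$; substituting \eqref{eq:plan-KKT} to rewrite the gradient of the norm, then using \eqref{eq:plan-constraint} to replace $\nabla_x\phi^{y_i}\cdot\hat{x}_i'(t)$ by $-\nabla_\theta\phi^{y_i}\cdot\theta'(t)$, collapses the expression to $s(x_i,t) = \lambda(t)\,\nabla_\theta\phi^{y_i}(\hat{x}_i(t),\theta(t))\cdot\theta'(t)$.

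\textbf{Pinning down $\lambda(t)$.} The magnitude is forced by the standard duality identity $\|\nabla_v\|v\|_p\|_q = 1$ for $v \neq 0$: taking the $\ell_q$ norm of both sides of \eqref{eq:plan-KKT} gives $|\lambda(t)|=1/\|\nabla_x\phi^{y_i}(\hat{x}_i(t),\theta(t))\|_q$. For the sign, I would use that $x_i$ is correctly classified, so $\phi^{y_i}(x_i,\theta(t))>0$ while $\phi^{y_i}(\hat{x}_i(t),\theta(t))=0$; hence $\nabla_x\phi^{y_i}(\hat{x}_i(t),\theta(t))$ points from $\hat{x}_i(t)$ into the correct-classification region, i.e., toward $x_i$. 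The gradient $\nabla_v\|v\|_p|_{v=\hat{x}_i-x_i}$ instead points away from $x_i$, so the two vectors in \eqref{eq:plan-KKT} are antiparallel, forcing $\lambda(t)>0$ and therefore $\lambda(t)=1/\|\nabla_x\phi^{y_i}\|_q$. Substituting this back delivers the claimed formula, and the special case $p=\infty,\,q=1$ is automatic from the Hölder-conjugate convention.

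\textbf{Main obstacle.} The clean execution above assumes a regime in which every object is smooth: $1<p<\infty$ (so $\|\cdot\|_p$ is differentiable at $\hat{x}_i - x_i \neq 0$), $\phi^{y_i}$ is $C^1$ in $(x,\theta)$, $\nabla_x\phi^{y_i}(\hat{x}_i(t),\theta(t))\neq 0$ (constraint qualification, so the Lagrange multiplier exists), and $t\mapsto\hat{x}_i(t)$ is itself differentiable. The last point is where I expect the main technical work to lie: differentiability of $\hat{x}_i(t)$ should be supplied by applying the implicit function theorem to the KKT system $\{\nabla_v\|v\|_p + \lambda\nabla_x\phi^{y_i} = 0,\;\phi^{y_i}=0\}$ in the variables $(\hat{x},\lambda)$. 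The boundary cases $p\in\{1,\infty\}$ are the remaining subtlety, since $\|\cdot\|_p$ is only subdifferentiable; I would handle them by interpreting \eqref{eq:plan-KKT} in the subgradient sense (any dual element on the appropriate face has $\ell_q$-norm $1$), or by passing through a smooth $\ell_{p\pm\varepsilon}$ surrogate and taking a limit.
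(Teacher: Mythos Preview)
Your proposal is correct and matches the paper's argument almost exactly for $1\le p<\infty$: the paper differentiates the constraint $\phi^{y_i}(\hat{x}_i(t),\theta(t))=0$ to get the level-set equation, invokes the Lagrange/KKT stationarity condition at $\hat{x}_i(t)$, and then uses the identity $\|\nabla_x\|x-a\|_p\|_q=1$ (their Lemma~1) together with the sign observation $\lambda(t)>0$ to identify the multiplier---precisely your steps \eqref{eq:plan-constraint}--\eqref{eq:plan-KKT}. The paper does \emph{not} invoke the implicit function theorem to justify differentiability of $t\mapsto\hat{x}_i(t)$; it simply lists that as an explicit assumption, so your plan there is in fact more ambitious than what the paper carries out.

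The one substantive divergence is the case $p=\infty$. Rather than passing through subgradients or an $\ell_{p\pm\varepsilon}$ limit, the paper proves a dedicated coordinate-wise optimality lemma: at the minimizer $\hat{x}$, the gradient $\nabla_x\phi^{y_i}(\hat{x})$ is supported on the active set $\mathcal{J}=\{j:|\hat{x}_j-x_{ij}|=\|\hat{x}-x_i\|_\infty\}$ and has the opposite sign of $\hat{x}_j-x_{ij}$ on $\mathcal{J}$. From this it computes $s(x_i,t)$ directly by writing $\frac{d}{dt}|\hat{x}_{ij}(t)-x_{ij}|$ for $j\in\mathcal{J}$ and summing against $\nabla_x\phi^{y_i}(\hat{x}_i)_j$ via the level-set equation, which collapses to $\|\nabla_x\phi^{y_i}(\hat{x}_i)\|_1$ in the denominator. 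Your subgradient route would also succeed (every subgradient of $\|\cdot\|_\infty$ at a nonzero point has $\ell_1$-norm exactly $1$, so the multiplier identification goes through), but the paper's coordinate argument is more self-contained and avoids any limiting procedure.
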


\paragraph{Remark.}
Note that \eqref{eq: expression_for_speed} is still valid when $\hat{x}_i(t)$ is just a locally closest boundary point (i.e. a local optimum of~\eqref{eq: closest_pt}). In this case, $s(x_i,t)$ is interpreted as the moving speed of decision boundary around the locally closest boundary point $\hat{x}_i(t)$.
The main assumption for~\eqref{eq: expression_for_speed} is the local differentiability of $\phi^{y_i}(\cdot,\theta(t))$ at $\hat{x}_i(t)$.  The full assumptions, proof and discussions are left to Appendix~\ref{app: closed_expression_speed}. 
Special care has been taken for $p=\infty$ since $\ell_\infty$ norm is not differentiable.

According to~\eqref{eq: expression_for_speed}, the speed $s(x_i,t_0) $ is positive at time $t_0$ when $\nabla_\theta \phi^{y_i} (\hat{x}_i(t_0),\theta(t_0)) \cdot \theta'(t_0) >0$, i.e., $\phi^{y_i}(\hat{x}_i(t),\theta(t))$ increases at time $t_0$, meaning that the boundary point $\hat{x}_i(t_0)$ will be correctly classified after the update. Also, the magnitude of the speed tends to be larger if $\|\nabla_x \phi^{y_i}(\hat{x}_i(t),\theta(t))\|_q$ is smaller, i.e., the margin function $\phi^{y_i}$ is flatter around $\hat{x}_i(t)$. In the remaining part of the paper, we will denote $s(x_i,t)$ and $R(x_i,t)$ as $s(x_i)$ and $R(x_i)$ when the indication is clear.

\textbf{Computing the closest boundary point.}\quad 
We use the Fast Adaptive Boundary Attack (FAB) \citep{croce2020minimally} to compute $\hat{x}_i(t)$ in \eqref{eq: expression_for_speed}.
Specifically, FAB iteratively projects onto the linearly approximated decision boundary with a bias towards the original data point, so that the resulting boundary point is close to the original point $x_i$. Note that FAB only serves as an algorithm to find $\hat{x}_i(t)$, and can be decoupled from the remaining part of the framework. In our experiments we find that FAB can reliably find locally closest boundary points given enough iterations, where the speed expression in~\eqref{eq: expression_for_speed} is still valid.
We give more details of how to check the local optimality condition of~\eqref{eq: closest_pt} and the performance of FAB in Appendix~\ref{app: compute_bdr_exact}. 
Note that in Section~\ref{ssec:soft_bdr}, we will see that directly using FAB is computationally prohibitive for robust training and we will propose a more efficient solution. 
In the next section, we will still use FAB to find closest boundary points for more accurate visualization of the dynamics during adversarial training. 

\subsection{Dynamics of adversarial training} \label{ssec: Dynamics of adversarial training}

In this section, we numerically investigate the dynamics of the decision boundary during adversarial training. We visualize the speed and identify the conflicting dynamics of the decision boundary.

\textbf{Experiment setting.}\quad To study the dynamics of AT in different stages of training where models have different levels of robustness,  we train a ResNet-18 \citep{he2016deep} with group normalization (GN) \citep{wu2018group} on CIFAR-10 using 10-step PGD under $\ell_\infty$ perturbation with $\epsilon=\frac{8}{255}$ from two pretrained models: (1) a partially trained model using natural training with $85\%$ clean accuracy and $0\%$ robust accuracy;  (2) a partially trained model using AT with $75\%$ clean accuracy and $42\%$ robust accuracy under 20-step PGD attack. Note that we replace the batch normalization (BN) layers with GN layers since the decision boundaries are not the same during training and evaluation when BN is used, which can cause confusion when studying the dynamics of the decision boundary. On both pretrained models, we run one iteration of AT on a batch of training data. For correctly classified points in the batch of data, we compute the margins as well as the speed of the decision boundary. 

\begin{wrapfigure}{r}{0.43\textwidth}
\vspace{-1.6em}
    \resizebox{\linewidth}{!}{\input{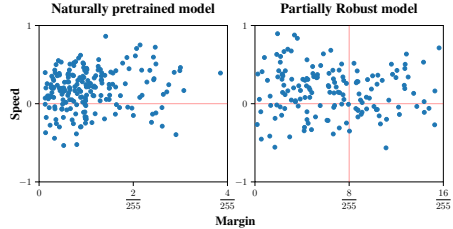}}
    \vspace{-1.8em}
    \caption{Margin-speed plot of AT on a training batch. Among points with margins smaller than $\frac{8}{255}$, there are $28.8\%$ and $29.4\%$ points with negative speed on each pretrained model.}
    \label{fig: AT_VR_scatter_intepretation}
    \vspace{-1.5em}
\end{wrapfigure}

\textbf{Conflicting dynamics.}\quad The dynamics of the decision boundary on both pretrained models under AT is shown in Figure~\ref{fig: AT_VR_scatter_intepretation}. The speed values are normalized
so that the maximum absolute value is 1 
for better visualization of their relative magnitude. We can observe that on both pretrained models, the decision boundary has negative speed w.r.t. a significant proportion of non-robust points with $R(x_i)<\frac{8}{255}$. That is, the margins of many vulnerable points \emph{decrease} during adversarial training even though the current update of the model is computed on these points, which counteracts the objective of robust training. In the next section, we will design a dynamics-aware robust training method to mitigate such conflicting dynamics issue.


\vspace{-0.5em}
\section{\algoName: Dynamics-aware Robust Training}\label{sec:Robust_training}
\vspace{-0.3em}

In this section, we propose \algoNameFull (\algoName) to mitigate the conflicting dynamics issue.
In Section~\ref{ssec:design_h}, we show how to design an objective function to prioritize improving smaller margins and how to compute the gradient of such objective. In Section~\ref{ssec:soft_bdr}, we overcome the expensive cost of finding the closest boundary points and present the full \algoName algorithm. 

\vspace{-0.3em}
\subsection{Objective for desirable dynamics}\label{ssec:design_h}

We aim to design a loss function $L^{R}(\theta)$ to directly increase the overall margins for effective robustness improvement. 
We propose to use the robustness loss $L^{R}(\theta) := \mathbb{E}_{x}[h(R_\theta(x))]$, where $h: \mathbb{R}\to\mathbb{R}$ is a carefully selected \textit{cost function} that assigns a cost value $h(R)$ to a margin $R$. When designing $h(\cdot)$, it is crucial that 
minimizing $L^{R}(\theta) = \mathbb{E}_{x}[h(R_\theta(x))]$
encourages the \emph{desirable dynamics} of the decision boundary: the decision boundary has positive speed w.r.t. vulnerable points with small margins.

\textbf{Dynamics-aware loss function.}\quad To design such a \emph{dynamics-aware} loss function, the following two properties of the cost function $h$ are desired. 
\textbf{(1) Decreasing} (i.e., $h^\prime(\cdot) < 0$): a point with a smaller margin should be assigned a higher cost value since it is more vulnerable.
\textbf{(2) Convex} (i.e., $h^{\prime\prime}(\cdot) > 0$): the convexity condition helps prioritize improving smaller margins. 
To see this, consider minimizing the loss function $L^{R}(\theta)$ on $m$ points $\{x_i,y_i\}_{i=1}^m$ with margins $\{R_i\}_{i=1}^m$, where the objective becomes $\frac{1}{m}\sum_{i=1}^{m}h(R_{\theta}(x_i))$. At each iteration, the optimizer should update the model to decrease the objective value. Therefore, in continuous-time we have that $\frac{d}{dt}\sum_{i=1}^{m}h(R(x_i,t)) < 0 $. 
Using the chain rule and the definition that the speed $s(x_i,t)=\frac{d}{dt}R(x_i,t)$, we obtain that $\sum_{i=1}^{m}h'(R_i)s(x_i,t) < 0$. 
Given that $h^\prime(\cdot) < 0$, the ideal case is that $s(x_i,t)>0$ for all $x_i$ and thus the sum $\sum_{i=1}^{m}h'(R_i)s(x_i,t) < 0$. In this case, the margins of all data points increase. 
However, due to the existence of conflicting dynamics as described in Section~\ref{ssec: Dynamics of adversarial training}, some points may have negative speed $s(x_i,t)<0$ while $\sum_{i=1}^{m}h'(R_i)s(x_i,t)$ stays negative. 
In the presence of such conflicting dynamics, if $|h^\prime(R_i)|$ is large (i.e., $h^\prime(R_i)$ is small since $h'(\cdot)<0$), it is more likely that $s(x_i,t)>0$ since otherwise it is harder to make $\sum_{i=1}^{m}h'(R_i)s(x_i,t)$ negative. When $h''(\cdot)>0$, a smaller margin $R_i$ has smaller $h'(R_i)$ and thus $s(x_i,t)$ tends to be positive. Therefore, requiring $h^{\prime\prime}(\cdot) > 0$ incentivizes the decision boundary to have positive speed w.r.t. points with smaller margins. 

How to design the optimal $h(\cdot)$ is still an open problem. In this paper, we propose to use 
\begin{equation}\label{eq:exp_h}
    h(R) =\Big\{\begin{matrix}
      \frac{1}{\alpha}\exp(-\alpha R), &   R < r_0\\
      0, & \text{otherwise}
    \end{matrix} 
\end{equation}
where the hyperparameters $\alpha >0$ and $r_0 >0$. Larger $\alpha$ values prioritize improving smaller margins. The threshold $r_0$ is used to avoid training on points that are too far away from the clean data points. 

\textbf{Difficulties of computing margin gradient.}\quad
Directly minimizing $\mathbb{E}_{x}[h(R_\theta(x))]$ through gradient-based optimization methods requires computing the gradient $\nabla_\theta h(R_{\theta}(x_i))$ w.r.t. the model parameters. However, it was previously unclear how to compute $\nabla_\theta h(R_{\theta}(x_i))$, which partly explains why previous works did not directly operate on the margins. The difficulty of computing $\nabla_\theta h(R_{\theta}(x_i))$  lies in the fact that $R_{\theta}(x_i)$, as defined in \eqref{eq: robustness_radius}, involves a constrained optimization problem and thus its gradient $\nabla_\theta R_{\theta}(x_i)$ cannot be computed straightforwardly. An additional challenge is dealing with the non-smoothness of the $\ell_\infty$ norm, which is widely used in the robust training literature.

\textbf{Our solution.}\quad We overcome the above challenges and provide the following close-form expression for the gradient of any smooth function of the margin. The proof is provided in Appendix~\ref{app: closed_expression_speed}.




\begin{theorem}[The gradient $\nabla_\theta h(R_{\theta}(x_i))$ of any smooth function of the margin] \label{thm:closed-form-gradient} For $1 \leq p \leq \infty$,
\begin{equation} \label{eq: expression_for_grad_hR}
    \nabla_\theta h(R_{\theta}(x_i)) = \frac{h'(R_{\theta}(x_i))}{\|\nabla_x \phi^{y_i}(\hat{x}_i,\theta)\|_q} \nabla_\theta \phi^{y_i} (\hat{x}_i,\theta)
\end{equation}
where $q$ satisfies that $1/q+1/p=1$. In particular, $q=1$ when $p=\infty$.
\end{theorem}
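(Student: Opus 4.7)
My approach is to reduce the statement to computing $\nabla_\theta R_\theta(x_i)$ using the chain rule: since $h$ is smooth by assumption,
\[
\nabla_\theta h(R_\theta(x_i)) = h'(R_\theta(x_i))\,\nabla_\theta R_\theta(x_i),
\]
so the entire content of the theorem is to show that $\nabla_\theta R_\theta(x_i) = \nabla_\theta \phi^{y_i}(\hat{x}_i,\theta)/\|\nabla_x \phi^{y_i}(\hat{x}_i,\theta)\|_q$. A short proof of this follows by piggybacking on Proposition~\ref{prop: speed_expression}: consider the one-parameter curve $\theta(t) = \theta + tv$ for an arbitrary direction $v$, so that $\theta'(0)=v$ and $\tfrac{d}{dt}R(x_i,t)\big|_{t=0} = \nabla_\theta R_\theta(x_i)\cdot v$. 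Proposition~\ref{prop: speed_expression} identifies this directional derivative with $\nabla_\theta \phi^{y_i}(\hat{x}_i,\theta)\cdot v/\|\nabla_x \phi^{y_i}(\hat{x}_i,\theta)\|_q$, and since $v$ is arbitrary the theorem follows.

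For a self-contained derivation I would work directly from the KKT conditions of~\eqref{eq: robustness_radius}. Under the assumed local differentiability of $\phi^{y_i}(\cdot,\theta)$ at $\hat{x}_i$ with $\nabla_x \phi^{y_i}(\hat{x}_i,\theta)\neq 0$, stationarity gives $\mu = \lambda\,\nabla_x \phi^{y_i}(\hat{x}_i,\theta)$ for some subgradient $\mu \in \partial_x\|\hat{x}_i - x_i\|_p$ and multiplier $\lambda \in \mathbb{R}$. The envelope theorem applied to the corresponding Lagrangian then yields $\nabla_\theta R_\theta(x_i) = -\lambda\,\nabla_\theta \phi^{y_i}(\hat{x}_i,\theta)$, because the objective of~\eqref{eq: robustness_radius} has no explicit $\theta$-dependence. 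The multiplier is pinned down by the classical dual-norm identity $\|\mu\|_q = 1$ for any subgradient of an $\ell_p$ norm at a nonzero point, giving $|\lambda| = 1/\|\nabla_x \phi^{y_i}(\hat{x}_i,\theta)\|_q$; its sign is fixed by the geometric condition that increasing $\phi^{y_i}$ at the boundary point pushes the level set $\{\phi^{y_i}=0\}$ away from $x_i$, producing the stated formula.

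The principal technical obstacle is the non-smooth case $p=\infty$ (and, symmetrically, $p=1$), where $\|\cdot\|_p$ has a nontrivial subdifferential and $\hat{x}_i(\theta)$ need not vary smoothly. I would handle this exactly as in Proposition~\ref{prop: speed_expression}: select any $\mu \in \partial_x\|\hat{x}_i - x_i\|_\infty$ satisfying the stationarity condition, use that $\|\mu\|_1 = 1$ still holds whenever $\hat{x}_i \neq x_i$, and verify that the right-hand side of~\eqref{eq: expression_for_grad_hR} depends only on quantities that are well defined under the paper's standing assumptions (local differentiability of $\phi^{y_i}$ at $\hat{x}_i$, nonzero $\nabla_x \phi^{y_i}$, and $\hat{x}_i$ a locally closest boundary point). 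With these identifications the argument carries over verbatim, and the chain-rule step composing with $h'$ is immediate.
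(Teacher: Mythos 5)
Your first paragraph is essentially the paper's own proof: reduce via the chain rule to $\nabla_\theta R_\theta(x_i)$, then identify it by pairing against an arbitrary direction and invoking Proposition~\ref{prop: speed_expression}. The paper phrases this as computing $\tfrac{d}{dt}h(R(x_i,t))$ in two ways and letting $\theta'(t)$ be arbitrary, which is the same calculation.

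Your ``self-contained'' KKT/envelope-theorem derivation in the remaining paragraphs is correct, but it is not a genuinely independent argument: it reassembles exactly the ingredients the paper uses to prove Proposition~\ref{prop: speed_expression} in Appendix~\ref{app: closed_expression_speed}. The envelope step (the objective of~\eqref{eq: robustness_radius} has no explicit $\theta$-dependence, so only the constraint term contributes) is precisely the paper's combination of its level-set equation with the stationarity condition, where the terms in $\hat{x}_i'(t)$ cancel; your dual-norm identity $\|\mu\|_q=1$ is Lemma~\ref{lemma: norm_pq_app}; and the sign of the multiplier is fixed, as in the paper, by $\phi^{y_i}(x_i)>0$. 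What the envelope phrasing buys is conceptual compression---it explains in one sentence why the sensitivity of $\hat{x}_i$ to $\theta$ drops out, where the paper cancels it explicitly. The one place to be more careful than you are is $p=\infty$: the argument needs $\tfrac{d}{dt}\|\hat{x}_i(t)-x_i\|_\infty = \mu\cdot\hat{x}_i'(t)$ for the \emph{particular} subgradient $\mu$ picked out by stationarity, and this does not hold for an arbitrary element of $\partial\|\cdot\|_\infty$. The paper's Lemma~\ref{lemma: Linf_optimality_app} does exactly this bookkeeping via the active coordinate set. You flag the issue and resolve it by deferring to the proposition's proof, which is sound but concedes that the alternative route is not actually self-contained for that case.
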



Note that another expression for the margin gradient (i.e., $h$ is the identity function in \eqref{eq: expression_for_grad_hR}) was derived in MMA \citep{Ding2020MMA}, with the following distinctions from us: (a) The expression in MMA does not apply to the $\ell_\infty$ norm while ours does. (b) The coefficient $\frac{1}{\|\nabla_x \phi^{y_i}(\hat{x}_i,\theta)\|_q}$ in our expression is more informative and simpler to compute. (c) MMA treats the aforementioned coefficient as a constant during training, and therefore does not properly follow the margin gradient. 

Computing $\nabla_\theta h(R_{\theta}(x_i))$ requires computing the closest boundary points $\hat{x}_i$, which can be computationally prohibitive for robust training. In the next section, we propose to use the closest point $\hat{x}_i^{\text{soft}}$ on the \emph{soft} decision boundary instead, whose quality of approximation to the exact decision boundary is controllable and computational cost is tractable. We will then present the full \algoName algorithm.

\subsection{Efficient Robust Training}\label{ssec:soft_bdr}

\textbf{Directly finding the closest boundary points is expensive.}\quad  
Since the closest boundary point $\hat{x}_i$ can be on the decision boundary between the true class and any other class, FAB needs to form a linear approximation of the decision boundary between the true class and every other class at each iteration. This requires computing the Jacobian of the classifier, and the computational cost scales linearly with the number of classes $K$ \citep{croce2020reliable}. Therefore, finding the closest points on the exact decision boundary is computationally prohibitive for robust training in multi-class classification settings, especially when $K$ is large. To remedy this, we propose to instead use the closest points on the \emph{soft decision boundary} as elaborated below.

\textbf{Soft decision boundary.}
We replace the maximum operator in logit margin (\eqref{eq: logit_margin}) with a smoothed maximum controlled by the temperature $\beta>0$. Specifically, we define the soft logit margin of the class $ y $ as 

\begin{equation} \label{eq: smoothed_margin}
    \Phi^{y}_{\theta} (x;\beta) = z_{\theta}^y(x) - \frac{1}{\beta} \log\sum_{y'\neq y} \exp(\beta z_{\theta}^{y'}(x))
\end{equation}

The soft decision boundary is defined as the zero level set of the soft logit margin: $ \Gamma_y^{\text{soft}}=\{x: \Phi^{y}_{\theta}(x;\beta) = 0\} $.  For $x_i$ with $\Phi^{y_i}_{\theta}(x_i;\beta)>0$, the closest soft boundary point is defined as 

\begin{equation} \label{eq: soft_robustness_radius}
    \hat{x}_i^{\text{soft}} = \argmin\nolimits_{\hat{x}} \|\hat{x}-x_i\|_p \quad \textrm{s.t.}  \quad\Phi^{y}_{\theta}(\hat{x};\beta) = 0,
\end{equation}

and the \emph{soft margin} is defined as $R_{\theta}^{\text{soft}}(x_i) = \|\hat{x}_i^{\text{soft}}-x_i\|_p $. Note that we do not define $R^{\text{soft}}(x_i)$ when $\Phi^{y_i}_{\theta}(x_i;\beta)<0$. The relationship between the exact and soft decision boundary is characterized by the following proposition:

\begin{proposition}
    If $x$ is on the soft decision boundary $\Gamma_y^{\text{soft}}$, i.e. $\Phi^{y}_{\theta} (x;\beta) = 0$, then $\frac{\log(K-1)}{\beta} \geq \phi^{y}_{\theta} (x) \geq 0$. Moreover, when $\Phi^{y_i}_{\theta}(x_i;\beta)>0$, then $R_{\theta}^{\text{soft}}(x_i) \leq R_{\theta}(x_i)$.
\end{proposition}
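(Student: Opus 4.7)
The plan is to reduce the whole statement to the classical log-sum-exp sandwich inequality $\max_i a_i \le \frac{1}{\beta}\log\sum_{i=1}^{n}\exp(\beta a_i) \le \max_i a_i + \frac{\log n}{\beta}$, which for $n=K-1$ and $a_{y'}=z^{y'}_{\theta}(x)$ yields the pointwise sandwich
\begin{equation*}
    \phi^{y}_{\theta}(x) - \frac{\log(K-1)}{\beta} \;\le\; \Phi^{y}_{\theta}(x;\beta) \;\le\; \phi^{y}_{\theta}(x) \qquad \text{for every } x.
\end{equation*}

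First I would prove the two-sided bound on $\phi^{y}_{\theta}(x)$ at a soft-boundary point. Substituting $\Phi^{y}_{\theta}(x;\beta)=0$ into the right inequality immediately gives $\phi^{y}_{\theta}(x)\ge 0$, and into the left inequality gives $\phi^{y}_{\theta}(x)\le \frac{\log(K-1)}{\beta}$. This handles the first claim.

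Next I would establish $R^{\text{soft}}_{\theta}(x_i)\le R_{\theta}(x_i)$ by a continuity argument. Let $\hat{x}_i$ be a closest exact boundary point, so $\phi^{y_i}_{\theta}(\hat{x}_i)=0$ and $\|\hat{x}_i-x_i\|_p=R_{\theta}(x_i)$. The upper sandwich inequality gives $\Phi^{y_i}_{\theta}(\hat{x}_i;\beta)\le 0$, while by assumption $\Phi^{y_i}_{\theta}(x_i;\beta)>0$. Since $\Phi^{y_i}_{\theta}(\cdot;\beta)$ is continuous (it is a continuous combination of the continuous logits), the intermediate value theorem applied along the line segment $\{(1-t)x_i+t\hat{x}_i : t\in[0,1]\}$ produces a point $\hat{x}^\star$ on the soft decision boundary. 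Because $\hat{x}^\star$ lies on this segment, $\|\hat{x}^\star-x_i\|_p \le \|\hat{x}_i-x_i\|_p = R_{\theta}(x_i)$, and taking the minimum over all soft-boundary points yields $R^{\text{soft}}_{\theta}(x_i)\le \|\hat{x}^\star-x_i\|_p\le R_{\theta}(x_i)$.

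I do not anticipate a real obstacle: the log-sum-exp sandwich is standard and everything else is either substitution or a one-line IVT application on a segment. The only subtlety worth writing carefully is that the intermediate value theorem must be applied on the segment (where the $\ell_p$ distance is monotone in $t$), rather than on a general path, so that the resulting soft-boundary witness is guaranteed to be no farther from $x_i$ than $\hat{x}_i$.
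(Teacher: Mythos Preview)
Your argument is correct. The paper does not actually include a written proof of this proposition; it is stated and then immediately followed by interpretive commentary (``the soft decision boundary is always closer to $x_i$ than the exact decision boundary \ldots\ the gap between the two decreases as $\beta$ increases and vanishes when $\beta\to\infty$''), which is precisely the content of the log-sum-exp sandwich you invoke. So there is nothing to compare against beyond confirming that your approach is the natural one the authors evidently had in mind.

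Two minor remarks. First, your sandwich inequality $\phi^{y}_{\theta}(x) - \tfrac{\log(K-1)}{\beta} \le \Phi^{y}_{\theta}(x;\beta) \le \phi^{y}_{\theta}(x)$ is exactly equivalent to $0 \le \phi^{y}_{\theta}(x) - \Phi^{y}_{\theta}(x;\beta) \le \tfrac{\log(K-1)}{\beta}$, which is the form in which the paper's informal remark about the ``controllable gap'' is most transparent. Second, your IVT step is clean and does not require the segment to lie in any particular domain, only that the logits (hence $\Phi$) be continuous along it; this is automatic for the neural networks considered. The subtlety you flag about using a straight segment rather than an arbitrary path is well placed: monotonicity of $t\mapsto\|t(\hat{x}_i-x_i)\|_p$ is what guarantees the soft-boundary witness is no farther than $\hat{x}_i$.
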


In other words, the soft decision boundary is always closer to $x_i$ than the exact decision boundary as shown in Figure~\ref{fig:soft_bdr}. Moreover,
the quality of approximation to the exact decision boundary is controllable: the gap between the two decreases as $\beta$ increases and vanishes when $\beta \to \infty$.
Therefore, increasing the soft margins will increase the exact margins as well. 

\begin{wrapfigure}{r}{0.3\textwidth}
\vspace{-2.5em}
    \centering
    \includegraphics[width=0.3\columnwidth]{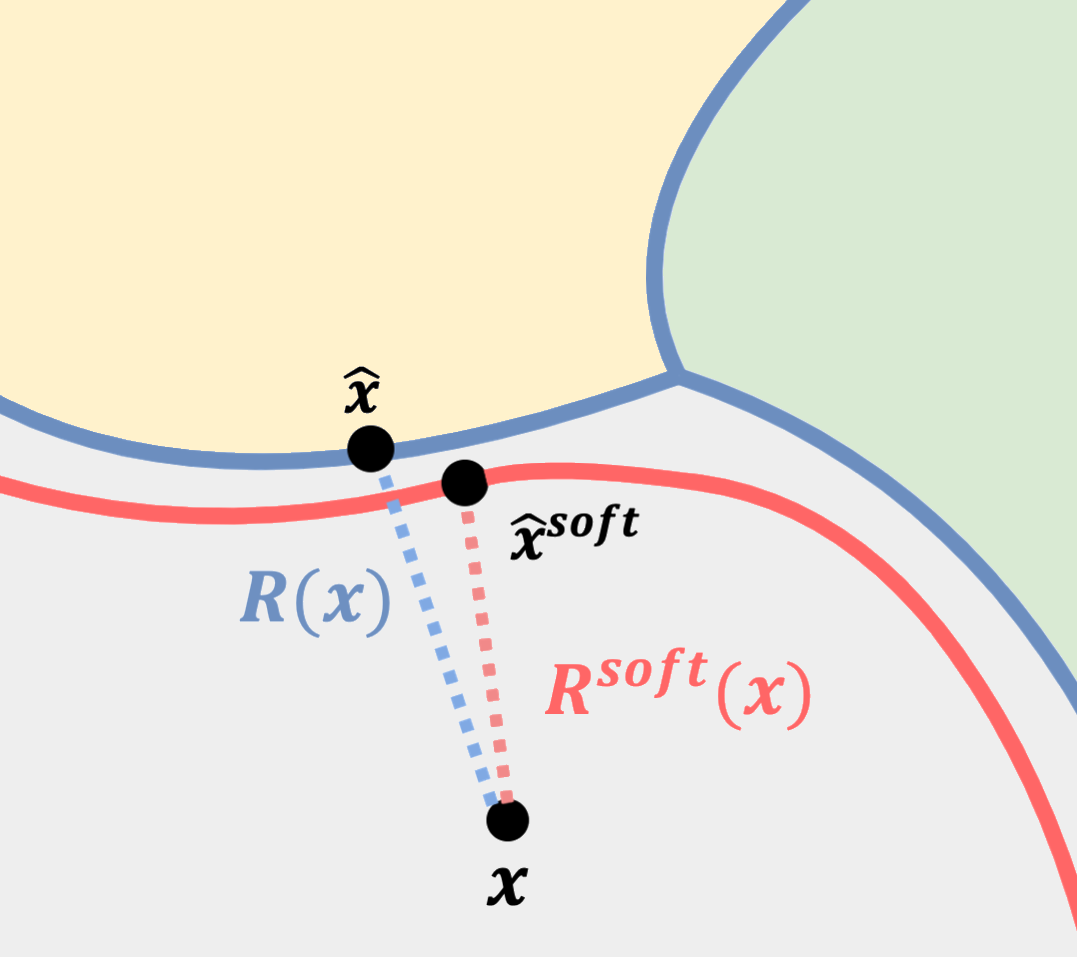}
    \vspace{-1.5em}
    \caption{Exact decision boundary (in blue) for three classes (yellow, green and grey regions) and the soft decision boundary  (in red) for the class of $x$.}
    \label{fig:soft_bdr}
\vspace{-1em}
\end{wrapfigure}

\textbf{Benefits of the soft decision boundary.}
\emph{(1) Computational efficiency.} Using the smoothed max operator, finding the closest soft boundary point does not require forming linear approximations for the decision boundary between the true class and all the other classes anymore. We adapt the FAB method to solve for $\hat{x}_i^{\text{soft}}$ (see details in Appendix~\ref{app: compute_bdr_soft}).  
Its computational cost for each iteration is independent of the number of classes $K$, which is the same as the PGD training. 
\emph{(2) Effective information usage.} Another benefit of using the smoothed max operator in soft logit margin is that, unlike the logit margin $\phi^{y_i}_{\theta}(x_i)$, the soft logit margin $\Phi^{y_i}_{\theta} (x_i;\beta)$ contains information of logit values of all classes. Therefore, the information of all classes is used at each iteration when finding $\hat{x}_i^{\text{soft}}$.

\textbf{Loss function and its gradient.} The overall objective of \algoName is to increase the soft margins and also achieve high clean accuracy. Denote a training data batch $\mathcal{B}$ of size $n$ and $\mathcal{B}^{+}_\theta$ of size $m$ to be $\{i\in\mathcal{B}: \Phi^{y_i}_{\theta}(x_i;\beta)>0\}$. Our proposed method \algoName uses the following loss function 

\begin{equation*}\label{final_loss}
    L_\theta(\mathcal{B}) = \frac{1}{n}\sum\nolimits_{i\in \mathcal{B}}l(x_i,y_i) + \frac{\lambda}{n}\sum\nolimits_{i\in \mathcal{B}^{+}_\theta} h(R_{\theta}^{\text{soft}}(x_i))
\end{equation*}

where the first term is the average cross-entropy loss on natural data points and the second term is for increasing the soft margins. The hyperparameter $\lambda$ balances the trade-off between clean and robust accuracy. By applying \eqref{eq: expression_for_grad_hR}, the gradient of the objective can be computed as

\begin{equation}\label{grad_loss}
    \nabla_\theta L_\theta(\mathcal{B}) = \frac{1}{n}\sum\nolimits_{i\in \mathcal{B}}\nabla_\theta l(x_i,y_i) + \frac{\lambda}{n}\sum\nolimits_{i\in \mathcal{B}^{+}_\theta} \frac{h'(R_{\theta}^{\text{soft}}(x_i))}{\|\nabla_x \Phi_{\theta}^{y_i} ( \hat{x}_i^{\text{soft}};\beta)\|_q} \nabla_\theta \Phi_{\theta}^{y_i} ( \hat{x}_i^{\text{soft}};\beta)
\end{equation}

Since the soft margin $R_{\theta}^{\text{soft}}(x_i)$ is only defined for $x_i$ with $\Phi^{y_i}_{\theta}(x_i;\beta)>0$, \algoName requires training on a pretrained model with a relatively high proportion of points with positive $\Phi^{y_i}_{\theta}$ values. In practice, we find that a burn-in period of several epochs of natural training is enough for such pretrained model.

\textbf{Novelty compared with prior works.} 
\textbf{(1)} \textbf{Direct} and \textbf{efficient} manipulation of the margin. 
\textbf{(1a)} In contrast to prior works that depend on indirect approximations of margins, \algoName directly operates on margins by utilizing the closed-from expression for the margin gradient in~\eqref{eq: expression_for_grad_hR} whose computation was previously unclear.
\textbf{(1b)} We significantly reduce the computational cost of computing margins and its gradients by introducing the \emph{soft margin}, a lower bound of the margin whose approximation gap is controllable. 
\textbf{(2)} \textbf{Prioritizing} the growth of smaller margins by carefully designing the cost function $h(\cdot)$ to mitigate the conflicting dynamics. 
Therefore, \algoName achieves more targeted and effective robustness improvement by directly and efficiently operating on margins as well as prioritizing the growth of smaller margins.

\section{Experiments}\label{sec:exp}
In this section, we empirically evaluate the effectiveness and performance of the proposed \algoName on the CIFAR-10 \citep{krizhevsky2009learning} and Tiny-ImageNet \citep{deng2009imagenet} datasets. In Section~\ref{sec:exp_robust}, we evaluate the adversarial robustness of \algoName and compare it with several state-of-the-art baselines. In Section~\ref{sec:exp_effect}, we visualize the dynamics of the decision boundary under \algoName and analyze how it alleviates the conflicting dynamics. 


\subsection{Robustness Evaluation}
\label{sec:exp_robust}

\paragraph{Architectures and training parameters.} 
In the experiments on the CIFAR-10 dataset, we use the Wide Residual Network  \citep{zagoruyko2016wide} with depth 28 and width factor 10 (WRN-28-10).
On the Tiny-ImageNet dataset, we use pre-activation ResNet-18 \citep{he2016identity}. 
Models are trained using stochastic gradient descent with momentum 0.9 and weight decay 0.0005 with batch size $256$ for $200$ epochs on CIFAR-10 and $100$ epochs on Tiny-ImageNet. 
We use stochastic weight averaging \citep{izmailov2018averaging} with a decay rate of $0.995$ as in prior work \citep{gowal2020uncovering}. 
We use a cosine learning rate schedule \citep{loshchilov2016sgdr} without restarts where the initial learning rate is set to $0.1$ for all baselines and \algoName. 
To alleviate robust overfitting \citep{rice2020overfitting}, we perform early stopping on a validation set of size 1024 using projected gradient descent (PGD) attacks with 20 steps.

\textbf{Baselines.}\quad
On CIFAR-10, the baselines include: (1) standard adversarial training (AT) \citep{madry2017towards} which trains on the worst case adversarial examples; (2) TRADES \citep{zhang2019theoretically} which trades off between the clean and robust accuracy; (3) MMA \citep{Ding2020MMA} which uses cross-entropy loss on the closest boundary points; (4) GAIRAT \citep{zhang2021geometryaware} which reweights adversarial examples based on the least perturbation iterations. (5) MAIL \citep{liu2021probabilistic} which reweights adversarial examples based on their logit margins. 
(6) AWP \citep{wu2020adversarial} which adversarially perturbs both inputs and model parameters.
On Tiny-ImageNet, we compare with AT, TRADES, and MART whose hyperparameter settings are available for this dataset. 
The hyperparameters of the baselines and full experimental settings are found in Appendix~\ref{app: exp_settings}. 

\textbf{Evaluation details.}\quad
We evaluate \algoName and the baselines under $\ell_\infty$ norm constrained perturbations. The final robust accuracy is reported on AutoAttack (AA) \citep{croce2020reliable}.
For all methods, we choose the hyperparameters to achieve the best robust accuracy under the commonly used perturbation bound 
$\epsilon=\frac{8}{255}$.
To fully compare the robustness performance among different methods, we report the robust accuracy 
under four additional perturbation bounds: $\frac{2}{255},\frac{4}{255},\frac{12}{255} $ and $\frac{16}{255}$.

\textbf{Hyperparameters of \algoName.}\quad 
We use the cost function $h(\cdot)$ in \eqref{eq:exp_h}.
On CIFAR-10, we use $\alpha=3$, $r_0 = \frac{16}{255}, \lambda = 1000$ and apply gradient clipping with threshold $0.1$. On Tiny-ImageNet, we use $\alpha=5$, $r_0 = \frac{32}{255}, \lambda = 500$ and apply gradient clipping with threshold $1$. The temperature $\beta$ is set to $5$. We use 20 iterations to find the closest soft boundary points using the adapted version of FAB. We use 10 epochs of natural training as the burn-in period. 

\begin{table}[!htbp]
\centering
\noindent
\resizebox{\textwidth}{!}{\begin{tabular}{lSSSSSSS}
\centering
Defense & {Clean} & {$\epsilon=\frac{2}{255}$} & {$\epsilon=\frac{4}{255}$} & {$\epsilon=\frac{8}{255}$} & {$\epsilon=\frac{12}{255}$} & {$\epsilon=\frac{16}{255}$}  \\
\midrule
AT & {$85.65 \pm 0.25$} & {$79.08 \pm 0.12$} & {$71.24 \pm 0.28$} & {$53.20\pm 0.16$} & {$32.94 \pm 0.32$}& {$16.12 \pm 0.23$}& \\
TRADES & {$82.92 \pm 0.30$} & {$77.69 \pm 0.16$}  & {$70.68 \pm 0.15$}& {$54.28 \pm 0.19$}  & {$\boldsymbol{36.65} \pm 0.24$}  & {$\boldsymbol{21.59} \pm 0.31$} & \\
MART & {$83.37 \pm 0.25$} & {$76.58 \pm 0.24$}  & {$70.19 \pm 0.18$}& {$52.91 \pm 0.24$}  & {$35.16 \pm 0.13$}  & {$18.80 \pm 0.14$} & \\ 
MMA & {$83.22 \pm 0.38$} & {$74.24 \pm 0.52$} & {$64.42 \pm 0.29$} & {$44.02 \pm 0.33$} &{$26.45 \pm 0.21$} & {$13.78 \pm 0.25$} & \\
GAIRAT & {$\boldsymbol{86.59} \pm 0.31$} & {$76.72 \pm 0.28$}& {$64.64 \pm 0.25$} & {$38.16 \pm 0.32$}  & {$19.01 \pm 0.18$} & {$7.55 \pm 0.17$} &\\
MAIL-TRADES & {$83.96 \pm 0.52$} & {$77.65 \pm 0.33$}  & {$69.11 \pm 0.35$}& {$50.14 \pm 0.29$}  & {$31.57 \pm 0.24$}  & {$16.98 \pm 0.15$} & \\
AWP & {$84.27 \pm 0.19$} & {$78.33 \pm 0.21$}& {$70.82 \pm 0.26$} & {$53.92 \pm 0.17$}  & {$35.24 \pm 0.26$} & {$20.40 \pm 0.14$} &\\
\midrule
\algoName & {$85.55 \pm 0.24$} & {$\boldsymbol{79.21} \pm 0.14$}  & {$\boldsymbol{71.73} \pm 0.18$} & {$\boldsymbol{54.69} \pm 0.14$} & {$35.74 \pm 0.25$} & {$20.79 \pm 0.18$} & \\
\bottomrule
\end{tabular}}
\caption{Clean and robust accuracy on CIFAR-10 under AA with different perturbation sizes on WRN-28-10.} 
\label{table: performance_Linf_cifar10}
\end{table}

\begin{table}[!htbp]
\centering
\noindent
\resizebox{\textwidth}{!}{\begin{tabular}{lSSSSSSS}
\centering
Defense & {Clean} & {$\epsilon=\frac{2}{255}$} & {$\epsilon=\frac{4}{255}$} & {$\epsilon=\frac{8}{255}$} & {$\epsilon=\frac{12}{255}$} & {$\epsilon=\frac{16}{255}$}  \\
\midrule
AT & {$48.09 \pm 0.38$} & {$38.82 \pm 0.26$} & {$30.18 \pm 0.27$} & {$16.46 \pm 0.19$} & {$7.74 \pm 0.20$}& {$3.05 \pm 0.17$}& \\
TRADES & {$46.68 \pm 0.30$} & {$37.84 \pm 0.21$}  & {$29.85 \pm 0.19$}& {$16.76 \pm 0.17$}  & {$8.97 \pm 0.23$}  & {$4.43\pm 0.11$} & \\
MART & {$45.51 \pm 0.29$} & {$36.68 \pm 0.34$} & {$29.15 \pm 0.25$} & {$17.79 \pm 0.15$} &{$9.91 \pm 0.17$} & {$5.31 \pm 0.17$} & \\
\midrule
\algoName & {$\boldsymbol{49.71} \pm 0.18$} & {$\boldsymbol{39.30} \pm 0.14$}  & {$\boldsymbol{30.69} \pm 0.21$} & {$\boldsymbol{18.02} \pm 0.18$} & {$\boldsymbol{10.08} \pm 0.09$} & {$\boldsymbol{5.65} \pm 0.12$} & \\
\bottomrule
\end{tabular}}
\caption{Clean and robust accuracy on Tiny-ImageNet under AA with different perturbation sizes on ResNet-18.} 
\label{table: performance_Linf_ImgeNet200}
\end{table}

\textbf{Performance.}\quad 
The evaluation results on CIFAR-10 and Tiny-ImageNet are shown in Table \ref{table: performance_Linf_cifar10} and Table \ref{table: performance_Linf_ImgeNet200}, respectively. 
On CIFAR-10, under three out of five perturbation bounds, \algoName achieves the best robustness performance among all baselines. 
On Tiny-ImageNet, \algoName obtains both the highest robust accuracy under all perturbation bounds and the highest clean accuracy. 
These results indicate the superiority of \algoName in increasing the margins. 
\textbf{(1)} Specifically, on CIFAR-10, \algoName achieves the highest robust accuracy under $\epsilon = \frac{2}{255},\frac{4}{255}$ and $\frac{8}{255}$, and achieves the second highest robust accuracy under $\epsilon = \frac{12}{255}$ and $\frac{16}{255}$, which is lower than TRADES. 
(1a) Since \algoName prioritizes increasing smaller margins which are more important, \algoName performs better than TRADES under smaller perturbation bounds and achieves much higher clean accuracy. 
(1b) Although GAIRAT and AT have higher clean accuracy than \algoName, their robustness performance is lower than \algoName under all perturbation bounds. 
(1c) Thanks to directly operating on margins in the input space and encourage robustness improvement on points with smaller margins, \algoName performs better than GAIRAT and MAIL-TRADES which use indirect approximations of the margins. 
\textbf{(2)} On Tiny-ImageNet, \algoName achieves the best clean accuracy and the best robust accuracy under all perturbation bounds.
Further experimental results using various hyperparameter settings and types of normalization layers are left to Appendix~\ref{app: further_exp_results}. We also provide results of training WRN-28-10 with additional data from generated models \citep{wang2023better} on CIFAR-10 in Appendix~\ref{app: aux_data}, where \algoName achieves 63.89\% robust accuracy under $\epsilon=\frac{8}{255}$ and 93.69\% clean accuracy.

\subsection{Dynamics of \algoName}
\label{sec:exp_effect}

In this section we provide further insights into how \algoName encourages the desirable dynamics by comparing it with adversarial training.

\textbf{Experimental setting.} \quad To compare the dynamics of the decision boundary during training using \algoName and AT, we empirically compute the margins and speed values for both methods. For fair comparison, we run \algoName and AT on the same pretrained models for one iteration on the same batch of training data points. The pretrained models include a partially trained model using natural training and a partially trained model using AT, which are the same as in Section \ref{ssec: Dynamics of adversarial training}. For all the correctly classified points in this batch, we compute the margins and speed values under both methods. Note that the speed and margins correspond to the exact decision boundary, instead of the soft decision boundary used by \algoName for robust training. Since both methods train on the same model and the same batch of data at this iteration, the margins are the same and only the speed values differ, which corresponds to the difference in dynamics of the decision boundary. 

\begin{wrapfigure}{r}{0.6\textwidth}
    \begin{center}
    \begin{subfigure}[b]{0.295\textwidth}
        \includegraphics[width=\textwidth]{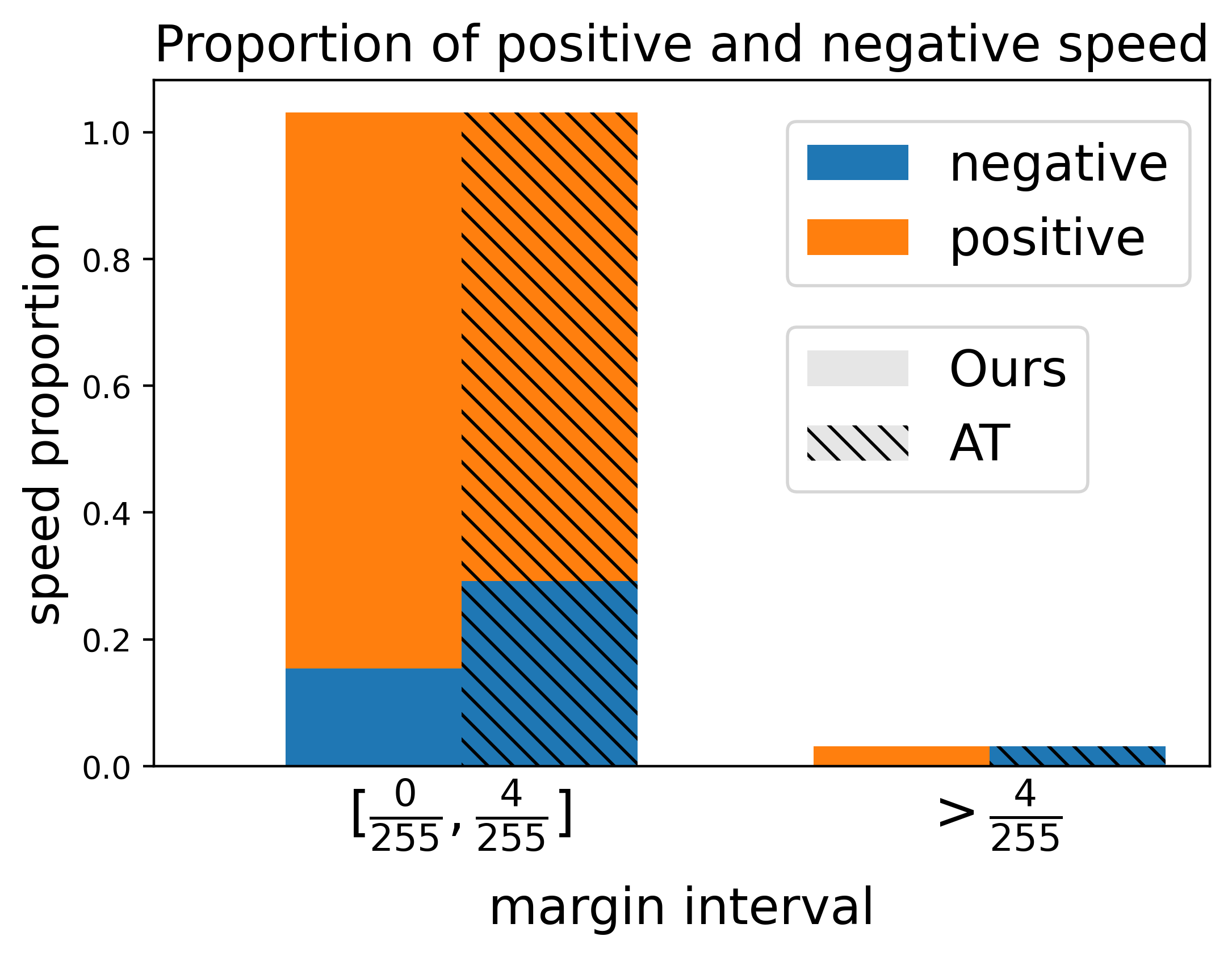}
        \caption{Naturally pretrained model}
        \label{fig: VR_bar_clean}
    \end{subfigure}%
    \begin{subfigure}[b]{0.30\textwidth}
        \includegraphics[width=\textwidth]{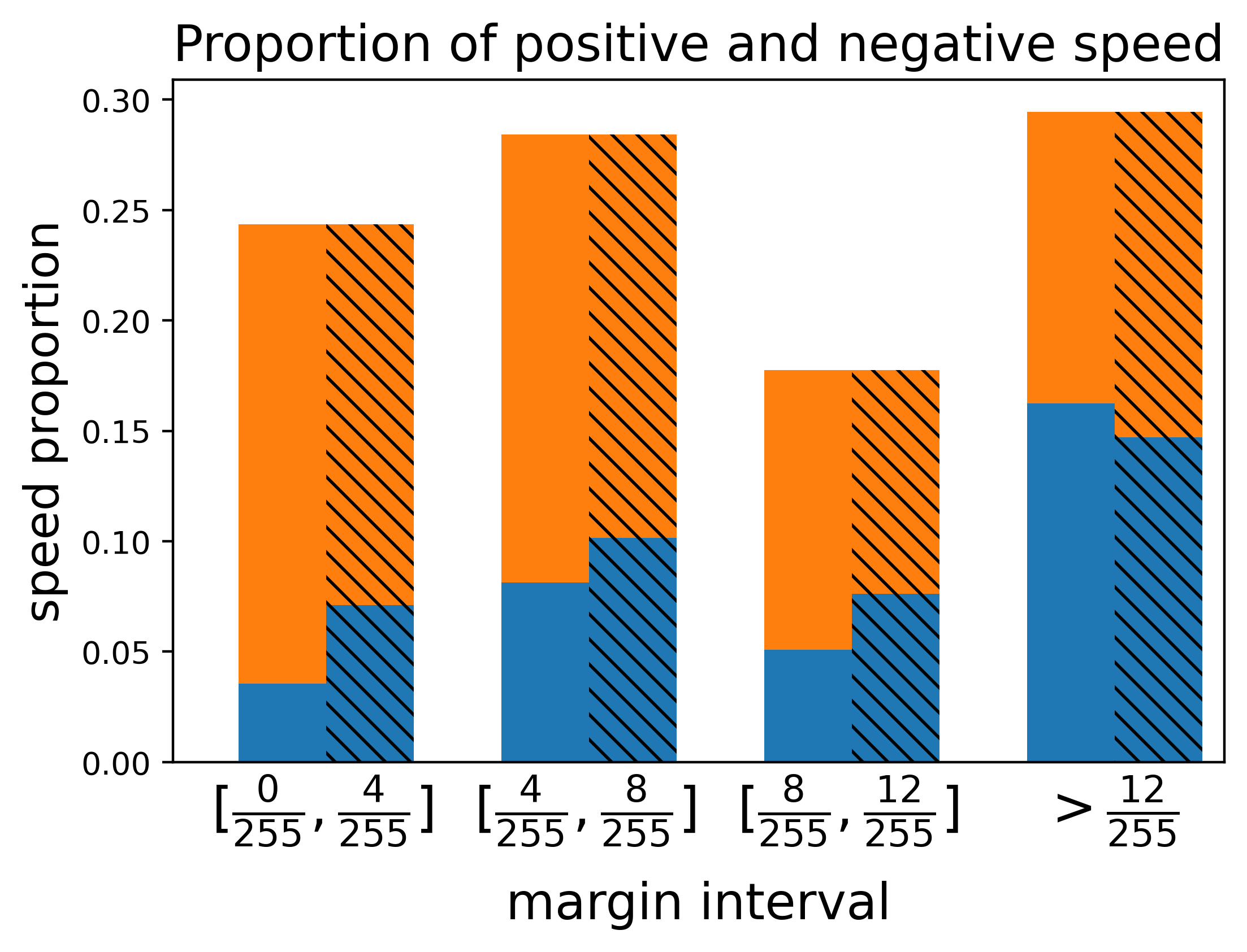}
        \caption{Partially robust model}
        \label{fig: VR_bar_robust}
    \end{subfigure}%
    \end{center}
    \vspace{-1em}
    \caption{Proportion of positive and negative speed values in each margin interval for AT and \algoName on a naturally pretrained model and a partially robust model. Observe that \algoName has lower proportion of negative speed for points with small margins ($<\frac{8}{255}$).}
    \label{fig: VR_bar}
\end{wrapfigure}

\textbf{\algoName mitigates the conflicting dynamics.} \quad We visualize the dynamics on both pretrained models under \algoName and AT in Figure \ref{fig: VR_bar}. Specifically, we divide the range of margins into multiple intervals and compute the proportion of positive and negative speed within all the correctly classified points. On the naturally pretrained model, most of the points have margins less than $\frac{4}{255}$ (the first bin) and are considered more vulnerable. Among these points, \algoName reduces the proportion of the negative speed from $29.2\%$ to $15.3\%$ when comparing with AT. Therefore, a higher percentage of the margins of vulnerable points will increase using \algoName. On the adversarially pretrained model, \algoName reduces the proportion of negative speed values in the first three margin intervals and therefore has better dynamics of the decision boundary. We conclude that compared with AT, \algoName leads to better dynamics of the decision boundary where increasing smaller margins is prioritized. 
\section{Conclusions and Discussions} \label{sec:conc}

This paper takes one more step towards understanding adversarial training by proposing a framework for studying the dynamics of the decision boundary. The phenomenon of conflicting dynamics is revealed, where the movement of decision boundary causes the margins of many vulnerable points to decrease and harms their robustness. To alleviate the conflicting dynamics, we propose \textit{\algoNameFull (\algoName)} which prioritizes moving the decision boundary away from more vulnerable points and increasing their margins. Experiments on CIFAR-10 and Tiny-ImageNet demonstrate that \algoName achieves improved robustness under various perturbation bounds. Future work includes (a) theoretical understanding of the dynamics of adversarial training; (b) developing more efficient numerical methods to find the closest boundary points for robust training.



\subsubsection*{Acknowledgments}
The authors would like to thank Zhen Zhang, Chen Zhu and Wenxiao Wang for helpful discussions over the ideas.
This work is supported by National Science Foundation NSF-IIS-FAI program, DOD-ONR-Office of Naval Research, DOD Air Force Office of Scientific Research, DOD-DARPA-Defense Advanced Research Projects Agency Guaranteeing AI Robustness against Deception (GARD), Adobe, Capital One and JP Morgan faculty fellowships.


\bibliography{references}
\bibliographystyle{iclr2023_conference}

\clearpage
\newpage
\appendix
\begin{center}{
\bf \LARGE
    Supplementary Material
}
\end{center}
\section{Additional Related Work}
\label{app:add_related}

\textbf{Decision boundary analysis}\quad
In this paper, we mathematically characterizes the dynamics of decision boundaries and provide methods to directly compute and control the dynamics. 
Prior to this work, there are also some interesting studies on the dynamics of margins, though from different perspectives.
\cite{rade2022reducing} point out that adversarial training leads to a superfluous increase in the margin along the adversarial directions, which can be a reason behind the trade-off between accuracy and robustness.
\cite{ortiz2020hold} investigate the relationship between data features and decision boundaries, and reveal several properties of CNNs and adversarial training. Their results show that adversarial training exploits the sensitivity and invariance of models to improve the robustness.
\cite{tramer2020fundamental} studies invariance-based adversarial examples and expose a fundamental trade-off between commonly used sesitivity-based adversarial examples and the invariance-based ones, where the behaviors of decision boundaries are identified.

\textbf{Other Approaches to Improve Adversarial Training.}\quad
Recent works~\citep{najafi2019robustness,rebuffi2021fixing,gowal2021improving,gowal2020uncovering} have shown that the robust accuracy of adversarial training can be improved significantly with additional data from unlabeled datasets, data augmentation techniques and generative models. 
These approaches enhance the robustness of models by augmenting the dataset, which is orthogonal to our proposed algorithm that focus on how to optimize the model with the original dataset. 
\cite{wu2020adversarial} show that model robustness is related to the flatness of weight loss landscape, which is implicitly achieved by commonly used adversarial learning techniques.
Based on this insight, the authors propose to explicitly regularize the flatness of the weight loss landscape, which can improve the robust accuracy of existing adversarial training methods.
\cite{cui2021learnable} propose to use logits from a clean model to guide the learning of a robust model, which leads to both high natural accuracy and strong robustness. \\
We note that our method focuses on a different perspective of adversarial training, i.e., dynamics of decision boundary, and can be combined with these techniques to further improve the robust accuracy of the model. The investigation of such combination is out of the scope of this paper, and will be addressed in our future work.

\textbf{Certifiable Robustness.}\quad
There is an important line of work studying guaranteed robustness of neural networks. 
For example, convex relaxation of neural networks~\citep{gowal2019scalable,zhang2018finding,wong2018provable,zhang2020towards,gowal2018effectiveness} bounds the output of a network while the input data is perturbed within an $\ell_p$ norm ball. 
Randomized smoothing~\citep{cohen2019certified} is another certifiable defense which adds Gaussian noise to the input during test time. 
\citet{croce2019provable} propose a provably robust regularization for ReLU networks that maximizes the linear regions of the classifier and the distance to the decision boundary. Note that certifiable robust radius is a strict lower bound of the margin, which is the focus of our work.

\newpage
\section{Proof of the closed-form expression for the speed 
in \eqref{eq: expression_for_speed}
and the margin gradient in \eqref{eq: expression_for_grad_hR} } \label{app: closed_expression_speed}

In this section, our goal is to prove the closed-form expression~\eqref{eq: expression_for_speed} as well as the margin gradient in \eqref{eq: expression_for_grad_hR} and provide further discussions. We first provide two preliminary lemmas and present the mathematical assumptions. Then we rigorously derive the closed-form expressions. Finally, we discuss more about the expression and its assumptions. 

\begin{lemma} \label{lemma: norm_pq_app}
For $1\leq p \leq \infty$ and let $q$ satisfies $1/q+1/p=1$. Let $a$ be any fixed vector. Then 
\begin{equation*} 
    \left\|\nabla_x \left\|x-a\right\|_p\right\|_q = 1
\end{equation*}
\end{lemma}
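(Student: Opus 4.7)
The plan is to compute $\nabla_x \|x-a\|_p$ in closed form and verify the identity by a short case analysis on $p$. Setting $y := x-a$, the translation invariance reduces the claim to showing $\|\nabla_y \|y\|_p\|_q = 1$ at any point $y \neq 0$ where the $p$-norm is classically differentiable.

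For the generic range $1 < p < \infty$, I would differentiate $f(y) = \left(\sum_j |y_j|^p\right)^{1/p}$ coordinate-wise to obtain
\begin{equation*}
    (\nabla f)_j \;=\; \frac{|y_j|^{p-1}\sign(y_j)}{\|y\|_p^{p-1}}.
\end{equation*}
Raising $|(\nabla f)_j|$ to the $q$-th power and summing, the conjugate relation $(p-1)q = p$ (which follows directly from $1/p + 1/q = 1$) makes the numerators sum to $\sum_j |y_j|^p = \|y\|_p^p$ while the common denominator is $\|y\|_p^{(p-1)q} = \|y\|_p^p$, so $\|\nabla f\|_q^q = 1$. For $p=1$, $(\nabla f)_j = \sign(y_j)$ wherever all coordinates of $y$ are nonzero, so $\|\nabla f\|_\infty = 1$. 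For $p=\infty$, at any $y$ with a unique coordinate $j^\ast$ attaining $\max_j |y_j|$, the gradient equals $\sign(y_{j^\ast})\,e_{j^\ast}$, whose $\ell_1$-norm is $1$.

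The only subtle point is that for $p \in \{1,\infty\}$ the norm fails to be differentiable on a measure-zero set, but this is immaterial: the lemma is applied in \eqref{eq: expression_for_speed} and \eqref{eq: expression_for_grad_hR} at a closest boundary point $\hat x_i \neq x_i$, and one may either restrict to the dense set of generic points or, equivalently, invoke the duality $\|y\|_p = \sup_{\|z\|_q \leq 1} z^{\top} y$ and pick a subgradient $z^\ast \in \partial \|y\|_p$, for which $\|z^\ast\|_q = 1$ by the characterization of norm-duality maximizers. Thus no genuine obstacle arises, and the lemma reduces entirely to the explicit computation above.
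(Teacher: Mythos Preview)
Your proof is correct and follows essentially the same approach as the paper: both reduce to $a=0$, compute the gradient of the $p$-norm explicitly, and verify the $q$-norm identity via the conjugate relation $(p-1)q=p$, with the case $p=\infty$ handled by observing the gradient is a signed standard basis vector. The only cosmetic differences are that you split off $p=1$ as its own case (the paper folds it into $1\le p<\infty$, which is slightly sloppy since then $q=\infty$ and the sum $\sum_k|\cdot|^q$ is not literally a $q$-norm) and that you add a remark on subgradients at nondifferentiable points, which the paper omits.
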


\begin{proof}
Without loss of generality, assume $a$ is the zero vector. Write the $k$-th component of $x$ as $x_k$.

\emph{Case 1: $1\leq p < \infty$}

By calculation, $\frac{\partial \|x\|_p}{\partial x_k} = (\frac{|x_k|}{\|x\|_p})^{p-1}\cdot \sign (x_k)$. Since $q = \frac{p}{p-1}$, we have that 
\begin{equation*}
\begin{aligned}
\sum_k |\frac{\partial \|x\|_p}{\partial x_k}|^{q} &= \sum_k  |(\frac{|x_k|}{\|x\|_p})^{p-1}\cdot \sign (x_k)|^{\frac{p}{p-1}} \\
&= \sum_k \frac{|x_k|^p}{\|x\|_p^p} \\
&= 1
\end{aligned}
\end{equation*}

Therefore, $\left\|\nabla_x \left\|x\right\|_p\right\|_q = (\sum_k |\frac{\partial \|x\|_p}{\partial x_k}|^{q})^{1/q} = 1$.

\emph{Case 2: $p = \infty$}

In this case, $\nabla_x \|x\|_\infty$ is a one-hot vector (with the one being the position of the element of $x$ with the largest absolute value). Therefore, $\left\|\nabla_x \left\|x\right\|_{\infty}\right\|_1$ = 1. 

\end{proof}

The following lemma deals with the optimality condition for $p = \infty$. Special care needs to be taken since $L_\infty$ norm is not a differentiable function.

\begin{lemma} \label{lemma: Linf_optimality_app}
Let $\hat{x}$ be a local optimum of the constrained optimization problem: 
\begin{equation*} \label{eq: closest_linf_app}
\setlength\abovedisplayskip{1pt}
\setlength\belowdisplayskip{1pt}
    \hat{x} = \argmin_{z} \|x-a\|_\infty \quad \textrm{s.t.}  \quad \phi(x) = 0,
\end{equation*}
where $a$ is any fixed vector with $\phi(a) > 0$.
Assume that $\phi$ is differentiable at point $\hat{x}$. Denote the coordinates set $\mathcal{J} = \{j: |\hat{x}_j - a_j| = \|\hat{x}-a\|_\infty\}$. Denote the $k$-th component of $\nabla_x \phi(\hat{x})$ as $\nabla_x \phi(\hat{x})_k$. Then (a) for $j \in \mathcal{J}$, $\nabla_x \phi(\hat{x})_j$ and $\hat{x}_j - a_j$ have opposite signs; (b) for $k\notin \mathcal{J}$, $\nabla_x \phi(\hat{x})_k = 0$. 
\end{lemma}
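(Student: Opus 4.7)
I reformulate the non-smooth $\ell_\infty$ problem with a slack variable $t=\|x-a\|_\infty$ and apply KKT. The equivalent smooth problem is
\begin{equation*}
\min_{x,t}\; t \quad \text{s.t.} \quad \phi(x)=0,\;\; t-(x_j-a_j)\geq 0,\;\; t+(x_j-a_j)\geq 0 \;\; (j=1,\dots,n).
\end{equation*}
With $\hat t=\|\hat x-a\|_\infty$, the point $(\hat x,\hat t)$ is a local minimum of this smooth problem. Let $\lambda\in\mathbb R$ be the equality multiplier and $\mu_j^\pm\geq 0$ the inequality multipliers. Stationarity in $t$ yields $\sum_j(\mu_j^++\mu_j^-)=1$, and stationarity in $x_k$ yields
\begin{equation*}
\lambda\,\nabla_x\phi(\hat x)_k+\mu_k^+-\mu_k^-=0.
\end{equation*}

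\textbf{Part (b).} For $k\notin\mathcal J$ both bound constraints are strictly slack at $\hat x$, so complementary slackness forces $\mu_k^\pm=0$ and the stationarity relation collapses to $\lambda\,\nabla_x\phi(\hat x)_k=0$. Since the $\mu$'s sum to $1$, some $\mu_j^\pm$ is positive at an index $j\in\mathcal J$, and the corresponding stationarity equation in $x_j$ forces $\lambda\neq 0$. Therefore $\nabla_x\phi(\hat x)_k=0$ for every $k\notin\mathcal J$.

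\textbf{Part (a).} For $j\in\mathcal J$ with $\hat x_j-a_j=\hat t>0$, only the $+$-constraint is active, so $\mu_j^-=0$ and $\nabla_x\phi(\hat x)_j=-\mu_j^+/\lambda$; the case $\hat x_j-a_j=-\hat t<0$ is symmetric and gives $\nabla_x\phi(\hat x)_j=\mu_j^-/\lambda$. To conclude opposite signs it remains to show $\lambda>0$. I do this geometrically using $\phi(a)>0$: since $a$ lies in the interior of $B(a,\hat t)$ with $\phi(a)>0$, local optimality of $\hat x$ implies that the interior of $B(a,\hat t)$ lies in $\{\phi>0\}$ near $\hat x$, for otherwise a strictly closer zero of $\phi$ to $a$ would exist, contradicting optimality. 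Hence $\nabla_x\phi(\hat x)$ points inward into $B(a,\hat t)$, i.e., is a nonnegative combination of the inward facet normals $-\sign(\hat x_j-a_j)e_j$ for $j\in\mathcal J$; comparing with the stationarity representation forces $\lambda>0$, completing (a).

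\textbf{Main obstacle.} The only delicate step is pinning down $\lambda>0$: the KKT framework leaves this multiplier unsigned for equality constraints, and the positive sign must be forced by the global hypothesis $\phi(a)>0$ through the geometric side-of-the-boundary argument above. A secondary concern is constraint qualification, but linear independence of the active-constraint gradients $(\sign(\hat x_j-a_j)e_j,-1)$ for $j\in\mathcal J$ together with $(\nabla_x\phi(\hat x),0)$ holds whenever $\nabla_x\phi(\hat x)\neq 0$, which is the implicit nondegeneracy assumption of the lemma.
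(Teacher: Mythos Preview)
Your approach differs from the paper's: you lift the non-smooth $\ell_\infty$ problem to a smooth one via a slack variable $t$ and invoke KKT, whereas the paper argues directly by perturbation and first-order Taylor expansion (pushing the active coordinates of $\hat x$ toward $a$, observing this lands strictly inside $B(a,\hat t)$, and reading off sign constraints on $\nabla_x\phi(\hat x)$ from the sign of the resulting change in $\phi$). Both routes work; yours gives cleaner bookkeeping for separating parts (a) and (b), while the paper's sidesteps any constraint-qualification concerns entirely.

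Two remarks on your write-up. First, LICQ is not the right qualification to cite: the active gradients $(-\sign(\hat x_j-a_j)\,e_j,1)$ for $j\in\mathcal J$ together with $(\nabla_x\phi(\hat x),0)$ can be linearly dependent even when $\nabla_x\phi(\hat x)\neq 0$ (take $\nabla_x\phi(\hat x)$ supported on $\mathcal J$ with $\sum_{j\in\mathcal J}\nabla_x\phi(\hat x)_j\,\sign(\hat x_j-a_j)=0$). The fix is painless: MFCQ holds trivially via the direction $(d_x,d_t)=(0,1)$, so KKT multipliers still exist under the nondegeneracy assumption $\nabla_x\phi(\hat x)\neq 0$. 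Second, your geometric argument that the interior of $B(a,\hat t)$ lies in $\{\phi>0\}$ near $\hat x$ is precisely the step the paper uses to launch its perturbation argument (``$\|x-a\|_\infty<\|\hat x-a\|_\infty$ implies $\phi(x)>0$ for $x$ sufficiently close to $\hat x$''). So at the one step you correctly flag as the main obstacle, your reasoning and the paper's coincide in substance and in level of rigor; the KKT layer is organizational scaffolding around the same geometric core.
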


\begin{remark*}
If $\phi(a)<0$, then (a) for $j \in \mathcal{J}$, $\nabla_x \phi(\hat{x})_j$ and $\hat{x}_j - a_j$ have the same sign; (b) for $k\notin \mathcal{J}$, $\nabla_x \phi(\hat{x})_k = 0$.
\end{remark*}

\begin{proof}
\emph{(a)}
Consider the perturbation $\hat{x}(\epsilon) = \hat{x} + (0,\cdots,\epsilon_{j_1},\cdots,\epsilon_{j_m},\cdots,0)$ where $\mathcal{J} = \{j_1,\cdots,j_m\}$ and $\epsilon$ is a $m$ dimensional vector with $j$-th component $\epsilon_j$. Since $\phi(a)>0$ and $\hat{x}$ is a local optimum, $\|x-a\|_\infty < \|\hat{x}-a\|_\infty$ imply $\phi(x) > 0$ if $x$ is sufficiently close to $\hat{x}$. Therefore, if every $\epsilon_{j_i}$ is chosen so that $|\hat{x}_{j_i} + \epsilon_{j_i} - a_{j_i}| < |\hat{x}_{j_i} - a_{j_i}|$ (that is, $\epsilon_{j_i}$ has different sign from $\hat{x}_{j_i} - a_{j_i}$) and $\|\epsilon\|$ sufficiently small, then $\|\hat{x}(\epsilon) - a\| < \|\hat{x}-a\|$ and thus $\phi(\hat{x}(\epsilon))>0$. 

On the other hand, by Taylor expansion and the fact that $\phi(\hat{x})=0$, we have that 

\begin{equation*}
    \phi(\hat{x}(\epsilon)) = \sum_{j\in\mathcal{J}} \nabla \phi(\hat{x})_j \epsilon_j + \mathcal{O}(\|\epsilon\|^2)
\end{equation*}

Therefore, $\sum_{j\in\mathcal{J}} \nabla \phi(\hat{x})_j \epsilon_j > 0$ for any such $\epsilon$. By taking other $\epsilon_k \to 0$ if necessary, we obtain that $\forall j \in \mathcal{J}$, $\nabla \phi(\hat{x})_j \epsilon_j \geq 0$, where $\epsilon_j$ has different sign from $\hat{x}_{j} - a_{j}$. Therefore, $\nabla \phi(\hat{x})_j$ and $\hat{x}_{j} - a_{j}$ have different signs.

\emph{(b)}
Take any $k \notin \mathcal{J}$ and consider the perturbation $\hat{x}(\epsilon) = \hat{x} + (0,\cdots,\epsilon_{j_1},\cdots,\epsilon_k,\cdots,\epsilon_{j_k},\cdots,0)$ where $\epsilon=(\epsilon_{j_1},\cdots,\epsilon_k,\cdots,\epsilon_{j_k})$. Choose any $\epsilon$ so that $\|\epsilon\|$ is sufficiently small, each $\epsilon_{j_i}$ has the opposite sign of $\hat{x}_{j_i} - a_{j_i}$ and $\epsilon_k$ small enough (which can be positive or negative), we have that $\phi(\hat{x}(\epsilon)) > 0$ since $\|\hat{x}(\epsilon) - a\|_\infty < \|\hat{x}-a\|_\infty$. By Taylor expansion, $\sum_{j\in\mathcal{J}} \nabla \phi(\hat{x})_j \epsilon_j + \epsilon_k \nabla_x \phi(\hat{x})_k > 0$ for any such $\epsilon$. By taking $\epsilon_j \to 0$ and using the fact that $\epsilon_k$ can be positive or negative, we conclude that $\nabla_x \phi(\hat{x})_k = 0$.

\end{proof}

Now we are ready to derive the closed-form expression of the speed. We first provide the full assumptions, then derive the expression, and finally we will discuss more about the assumptions. We will write $\hat{x}_i(t)$ as $\hat{x}_i$ when the indication is clear. 
\begin{assumption} \label{assumption: differentible}
Suppose that $(x_i,y_i)$ is correctly classified by $f_{\theta(t)}$ in some time interval $t \in I$ and $\hat{x}_i(t)$ is a locally closest boundary point in the sense that for any $t\in I$, it is the local optimum of the following:
\begin{equation*} 
    \hat{x}_i(t) = \argmin\nolimits_{\hat{x}} \|\hat{x}-x_i\|_p \quad \textrm{s.t.}  \quad \phi^{y_i}(\hat{x},\theta(t)) = 0.
\end{equation*}
Assume that in the time interval $I$: (a) $\hat{x}_i(t)$ is differentiable in $t$; (b) $\phi^{y_i}$ is differentiable at point $\hat{x}_i(t)$ and at the current parameter $\theta(t)$.
\end{assumption}

\begin{proposition*}[Closed-form expression of the speed $s(x_i,t)$] For $1 \leq p \leq \infty$ and under Assumption~\ref{assumption: differentible}, define the (local) speed according to $\hat{x}_i(t)$ in Assumption~\ref{assumption: differentible} as $s(x_i,t) = \frac{d}{dt}\|\hat{x}_i(t) - x_i\|_p$, we have the following:

\begin{equation*} 
    s(x_i,t) = \frac{1}{\|\nabla_x \phi^{y_i}(\hat{x}_i(t),\theta(t))\|_q} \nabla_\theta \phi^{y_i} (\hat{x}_i(t),\theta(t)) \cdot \theta'(t)
\end{equation*}
where $q$ satisfies that $1/q+1/p=1$. In particular, $q=2$ when $p=2$ and $q=1$ when $p=\infty$.
\end{proposition*}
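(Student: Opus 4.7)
The plan is to rewrite $s(x_i,t)$ via the chain rule, use the identity $\phi^{y_i}(\hat{x}_i(t),\theta(t))\equiv 0$ to relate $\hat{x}_i'(t)$ to $\theta'(t)$, and close the loop with the first-order optimality condition satisfied by $\hat{x}_i$. Under Assumption~\ref{assumption: differentible}, differentiating the constraint in $t$ gives
\begin{equation*}
    \nabla_x \phi^{y_i}(\hat{x}_i,\theta)\cdot \hat{x}_i'(t) \;=\; -\,\nabla_\theta \phi^{y_i}(\hat{x}_i,\theta)\cdot \theta'(t),
\end{equation*}
so what remains is to match the left-hand side with $s(x_i,t)$ times a positive scalar that one can read off geometrically.

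For $1\le p<\infty$, the map $x\mapsto \|x-x_i\|_p$ is differentiable at $\hat{x}_i$ (since $\hat{x}_i\neq x_i$), so the chain rule gives $s(x_i,t)=\nabla_x\|x-x_i\|_p\big|_{\hat{x}_i}\cdot\hat{x}_i'(t)$. I would then apply Lagrange's theorem to the constrained problem defining $\hat{x}_i$ to obtain a scalar $\mu$ with $\nabla_x\|x-x_i\|_p\big|_{\hat{x}_i}=\mu\,\nabla_x\phi^{y_i}(\hat{x}_i,\theta)$. Taking $q$-norms on both sides and invoking Lemma~\ref{lemma: norm_pq_app} pins $|\mu|=1/\|\nabla_x\phi^{y_i}(\hat{x}_i,\theta)\|_q$. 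The sign is determined by noting that a small step from $\hat{x}_i$ toward $x_i$ strictly decreases $\|x-x_i\|_p$ while strictly increasing $\phi^{y_i}$ (because $\phi^{y_i}(x_i)>0=\phi^{y_i}(\hat{x}_i)$), so the two gradients are anti-parallel and $\mu=-1/\|\nabla_x\phi^{y_i}(\hat{x}_i,\theta)\|_q$. Substituting and combining with the differentiated constraint yields the claim.

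For $p=\infty$, the norm is non-differentiable whenever the active set $\mathcal{J}=\{j:|\hat{x}_{i,j}-x_{i,j}|=\|\hat{x}_i-x_i\|_\infty\}$ has more than one element, so Lagrange multipliers cannot be applied directly. Instead I would invoke Lemma~\ref{lemma: Linf_optimality_app}: $\nabla_x\phi^{y_i}(\hat{x}_i,\theta)_k=0$ for $k\notin\mathcal{J}$, and for $j\in\mathcal{J}$ its sign is opposite to that of $\hat{x}_{i,j}-x_{i,j}$. Expanding the inner product in the differentiated constraint and using $\sign(\hat{x}_{i,j}-x_{i,j})\,\hat{x}_{i,j}'(t)=\tfrac{d}{dt}|\hat{x}_{i,j}(t)-x_{i,j}|$ for $j\in\mathcal{J}$ gives
\begin{equation*}
    \nabla_x\phi^{y_i}(\hat{x}_i,\theta)\cdot\hat{x}_i'(t) \;=\; -\sum_{j\in\mathcal{J}}\bigl|\nabla_x\phi^{y_i}(\hat{x}_i,\theta)_j\bigr|\,\frac{d}{dt}|\hat{x}_{i,j}(t)-x_{i,j}|.
\end{equation*}
For each $j\in\mathcal{J}$ the non-negative function $t\mapsto \|\hat{x}_i(t)-x_i\|_\infty-|\hat{x}_{i,j}(t)-x_{i,j}|$ vanishes at $t$ and is differentiable there, hence has zero derivative; so $\tfrac{d}{dt}|\hat{x}_{i,j}(t)-x_{i,j}|=s(x_i,t)$ for all $j\in\mathcal{J}$. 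The sum therefore collapses to $-s(x_i,t)\,\|\nabla_x\phi^{y_i}(\hat{x}_i,\theta)\|_1$, and combining with the differentiated constraint (with $q=1$) finishes the claim.

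The main obstacle is the $p=\infty$ case: the tidy Lagrange-multiplier argument used for $1\le p<\infty$ fails because the $L_\infty$ norm is non-smooth at $\hat{x}_i-x_i$ whenever several coordinates are active, so one must replace the gradient identity by the coordinate-level optimality structure of Lemma~\ref{lemma: Linf_optimality_app} and justify that the derivatives of $|\hat{x}_{i,j}(t)-x_{i,j}|$ for $j\in\mathcal{J}$ coincide. The remaining work is essentially bookkeeping of signs and unifying the two cases by noting that $1/q=1-1/p$ holds throughout $1\le p\le \infty$ under the convention $q=1$ at $p=\infty$.
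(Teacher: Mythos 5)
Your proposal follows the paper's argument essentially verbatim: differentiate the constraint $\phi^{y_i}(\hat{x}_i(t),\theta(t))=0$ to get the level-set relation, couple it with the first-order optimality condition at $\hat{x}_i$ (Lagrange multiplier for $1\le p<\infty$, the coordinatewise conditions of Lemma~\ref{lemma: Linf_optimality_app} for $p=\infty$), and normalize via Lemma~\ref{lemma: norm_pq_app}. The one spot where you are slightly more careful than the paper is the step justifying $\tfrac{d}{dt}|\hat{x}_{i,j}(t)-x_{i,j}|=s(x_i,t)$ for every active coordinate $j\in\mathcal{J}$ (arguing that the non-negative differentiable function $t\mapsto\|\hat{x}_i(t)-x_i\|_\infty-|\hat{x}_{i,j}(t)-x_{i,j}|$ vanishes at $t$ and so has zero derivative), whereas the paper simply asserts this equality.
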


\begin{proof}
\emph{Case 1: $1\leq p < \infty$}

To compute $ s(x_i,t) = \frac{d}{dt} \|\hat{x}_i(t)-x_i\|_2$, we need to characterize the curve of the closest boundary point $\hat{x}_i(t)$, where two key points stand out. First, $\hat{x}_i(t)$ is on the decision boundary $\Gamma_y (t)$ and thus  $\phi^y(\hat{x}_i(t),\theta(t)) = 0$ for all $t \in I$. By taking the time derivative on both sides, we obtain the level set equation \citep{osher2004level,aghasi2011parametric}

\begin{equation} \label{eq: LF_app}
	\nabla_x \phi^{y_i}(\hat{x}_i(t),\theta(t))\cdot \hat{x}_i'(t) + \nabla_\theta \phi^{y_i} (\hat{x}_i(t),\theta(t)) \cdot \theta'(t) = 0
\end{equation}

Second, $\hat{x}_i(t)$ is the optimal solution of constrained optimization \eqref{eq: closest_pt}. Therefore, we have the following optimality condition:

\begin{equation} \label{eq: optimality_app}
    \nabla_x \phi^{y_i}(\hat{x}_i(t),\theta(t)) + \lambda(t)\nabla_x\|\hat{x}_i(t)-x_i\|_p = 0
\end{equation}

Since $x_i$ is correctly classified, $\phi^{y_i}(x_i) > 0$. Since $\hat{x}_i(t)$ is the closest point to $x_i$ whose $\phi^{y_i}$ value is zero, $\lambda(t) >0$. By taking the $L_q$ norm in Equation~\eqref{eq: optimality_app} and using Lemma~\ref{lemma: norm_pq_app}, we obtain that $\lambda(t) = \|\nabla_x \phi^{y_i}(\hat{x}_i(t),\theta(t))\|_q$.

Now, we derive $s(x_i,t)$ as follows:

\begin{equation*}
\begin{aligned}
    s(x_i,t) &= \frac{d}{dt} \|\hat{x}_i(t)-x_i\|_p  \\
    & = \nabla_x\|\hat{x}_i(t)-x_i\|_p \cdot \hat{x}_i'(t) \\
    & = -\frac{1}{\lambda} \nabla_x \phi^{y_i}(\hat{x}_i(t),\theta(t)) \cdot \hat{x}_i'(t) && \text{(By the optimality condition~\eqref{eq: optimality_app})} \\
    & = \frac{1}{\lambda} \nabla_\theta \phi^{y_i} (\hat{x}_i(t),\theta(t)) \cdot \theta'(t) && \text{(By the level set equation~\eqref{eq: LF_app})}\\
    & = \frac{\nabla_\theta \phi^{y_i} (\hat{x}_i(t),\theta(t)) \cdot \theta'(t)}{\|\nabla_x \phi^{y}(\hat{x}_i(t),\theta(t))\|_q}  
\end{aligned}
\end{equation*}

\emph{Case 2: $p = \infty$}

Note that since $L_\infty$ is not differentiable, the optimality condition in Equation~\eqref{eq: optimality_app} does not hold anymore. 

Denote the $j$-th component of $\hat{x}_i(t)$ and $x_i$ as $\hat{x}_{ij}(t)$ and $x_{ij}$. Let $\mathcal{J} = \{j: |\hat{x}_{ij}(t) - x_{ij}| = \|\hat{x}_i(t)-x_i\|_\infty\}$. By Lemma~\ref{lemma: Linf_optimality_app}, $s(x_i,t) = \frac{d}{dt}|\hat{x}_{ij}(t) - x_{ij}| = \hat{x}_{ij}^{'}(t) \sign (\hat{x}_{ij}(t)-x_{ij}) = - \hat{x}_{ij}^{'}(t) \sign (\nabla_x\phi^{y_i}(\hat{x}_{i})_j)$ for all $j \in \mathcal{J}$. Therefore, by Equation~\eqref{eq: LF_app} and Lemma~\ref{lemma: Linf_optimality_app}

\begin{equation*}
\begin{aligned}
- \nabla_\theta \phi^{y_i} (\hat{x}_i(t),\theta(t)) \cdot \theta'(t) &= \nabla_x \phi^{y_i}(\hat{x})\cdot \hat{x}_i'(t)\\
&= \sum_{j \in \mathcal{J}} \nabla_x \phi^{y_i}(\hat{x}_i)_j\cdot \hat{x}_{ij}'(t) \\
&= \sum_{j \in \mathcal{J}} - \nabla_x \phi^{y_i}(\hat{x}_i)_j \cdot \frac{s(x_i,t)}{\sign (\nabla_x\phi^{y}(\hat{x}_{i})_j)} \\
&= - \sum_{j \in \mathcal{J}} |\nabla_x\phi^{y_i}(\hat{x}_{i})_j| \cdot s(x_i,t)
\end{aligned}
\end{equation*}

Therefore, $s(x_i,t) = \frac{\nabla_\theta \phi^{y_i} (\hat{x}_i(t),\theta(t)) \cdot \theta'(t)}{\sum_{j \in \mathcal{J}} |\nabla_x\phi^{y_i}(\hat{x}_{i})_j|} = \frac{\nabla_\theta \phi^{y_i} (\hat{x}_i(t),\theta(t)) \cdot \theta'(t)}{\|\nabla_x\phi^{y_i}(\hat{x}_{i})\|_1}$, where the last equality follows from Lemma~\ref{lemma: Linf_optimality_app} that the components of $\nabla_x \phi^{y_i}(\hat{x}_i)$ are zeros if they are not in $\mathcal{J}$.
 
\end{proof}

As an corollary of the proposition we prove above, we can obtain the closed-form expression for the gradient of margin (or the gradient of any smooth function of the margin) as follows:

\begin{theorem*}
[Closed-form expression of $\nabla_\theta h(R_{\theta}(x_i))$] For $1 \leq p \leq \infty$,
\begin{equation*}
    \nabla_\theta h(R_{\theta}(x_i)) = \frac{h'(R_{\theta}(x_i))}{\|\nabla_x \phi^{y_i}(\hat{x}_i,\theta)\|_q} \nabla_\theta \phi^{y_i} (\hat{x}_i,\theta)
\end{equation*}
where $q$ satisfies that $1/q+1/p=1$.
\end{theorem*}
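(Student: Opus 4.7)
The plan is to derive the theorem as a direct consequence of the closed-form expression for the speed already established in Proposition on the speed of the decision boundary. The key observation is that the speed formula holds for an arbitrary smooth parameter trajectory $\theta(t)$, so choosing one-parameter families in every direction lets us read off the ordinary gradient $\nabla_\theta R_\theta(x_i)$, after which the chain rule delivers the stated expression for $\nabla_\theta h(R_\theta(x_i))$.

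Concretely, I would first fix a base parameter $\theta$ at which Assumption~\ref{assumption: differentible} holds: $(x_i,y_i)$ is correctly classified, $\hat{x}_i$ is a (locally) closest boundary point to $x_i$, and $\phi^{y_i}$ is differentiable at $(\hat{x}_i,\theta)$. For an arbitrary direction $v$ in parameter space, consider the one-parameter family $\theta(t) = \theta + tv$ so that $\theta'(0) = v$. By an implicit-function-theorem argument using the level-set equation and optimality condition from the proof of the proposition, the locally closest boundary point $\hat{x}_i(t)$ can be tracked smoothly in $t$ for $|t|$ small, so Assumption~\ref{assumption: differentible} remains in force on a short time interval around $t=0$. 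The proposition then yields
\begin{equation*}
\left.\frac{d}{dt}R(x_i,t)\right|_{t=0} = \frac{1}{\|\nabla_x \phi^{y_i}(\hat{x}_i,\theta)\|_q} \,\nabla_\theta \phi^{y_i}(\hat{x}_i,\theta)\cdot v.
\end{equation*}

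Since the left-hand side is, by definition, the directional derivative $\nabla_\theta R_\theta(x_i)\cdot v$ and the right-hand side is linear in the arbitrary $v$, we conclude
\begin{equation*}
\nabla_\theta R_\theta(x_i) = \frac{1}{\|\nabla_x \phi^{y_i}(\hat{x}_i,\theta)\|_q}\,\nabla_\theta \phi^{y_i}(\hat{x}_i,\theta).
\end{equation*}
Applying the standard chain rule to the smooth scalar function $h$ then gives $\nabla_\theta h(R_\theta(x_i)) = h'(R_\theta(x_i))\,\nabla_\theta R_\theta(x_i)$, which is exactly \eqref{eq: expression_for_grad_hR}. For $p=\infty$ I would invoke the $p=\infty$ case of the proposition (which already handled the non-differentiability of the norm via Lemma~\ref{lemma: Linf_optimality_app}) in place of the $1\le p<\infty$ case, with $q=1$; the chain rule step is identical.

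The main obstacle is justifying that $\hat{x}_i(t)$ varies smoothly in $t$ along the chosen trajectory so that the proposition's hypotheses hold on an open interval around $t=0$, rather than only pointwise. This requires a local implicit-function argument: the constraint $\phi^{y_i}(\hat{x},\theta(t))=0$ together with the optimality condition (or its $\ell_\infty$ analogue from Lemma~\ref{lemma: Linf_optimality_app}) defines $\hat{x}_i(t)$ implicitly, and nondegeneracy of $\nabla_x \phi^{y_i}(\hat{x}_i,\theta)$ (guaranteed by $\phi^{y_i}(x_i)>0$ and the Lagrange multiplier being positive, as in the proof of the proposition) ensures solvability and smoothness for $|t|$ small. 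With that in hand, every remaining step is routine.
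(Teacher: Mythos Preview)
Your proposal is correct and follows essentially the same approach as the paper: invoke the speed formula from the proposition for an arbitrary parameter trajectory (you use linear rays $\theta+tv$, the paper uses an arbitrary $\theta'(t)$), observe that the directional derivative of the margin matches a linear functional in $\theta'$, and read off the gradient before applying the chain rule with $h$. The only cosmetic difference is that the paper applies the chain rule to $h(R(x_i,t))$ first and then identifies $\nabla_\theta h(R)$ directly, whereas you identify $\nabla_\theta R$ first and compose with $h'$ afterward; both routes are equivalent, and your extra care about smoothness of $\hat{x}_i(t)$ via an implicit-function argument simply makes explicit what the paper packages into Assumption~\ref{assumption: differentible}.
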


\begin{proof}
    In continuous time we consider $h(R(x_i,t))$ (or more rigorously, $h(R(x_i,\theta(t)))$)  and its time derivative. We use the following relationship between the gradient and the time derivative, where $\theta'(t)$ can be any update rule:

    \begin{equation*}
        \frac{d}{dt}h(R(x_i,t)) = \nabla_\theta h(R(x_i,t)) \cdot \theta'(t)
    \end{equation*}

    On the other hand:
    
    \begin{equation*}
    \begin{aligned}
        \frac{d}{dt}h(R(x_i,t)) &= h'(R(x_i,t)))\frac{d}{dt}R(x_i,t) \\ 
        &= h'(R(x_i,t))s(x_i,t) \\
        &=  \frac{h'(R(x_i))}{\|\nabla_x \phi^{y_i}(\hat{x}_i(t),\theta(t))\|_q} \nabla_\theta \phi^{y_i} (\hat{x}_i(t),\theta(t)) \cdot \theta'(t)
    \end{aligned}
    \end{equation*}

    where the last equality uses the closed-form expression for the speed $s(x_i,t)$.

    Therefore we have that for any $\theta'(t)$, $\nabla_\theta h(R(x_i,t)) \cdot \theta'(t) = \frac{h'(R(x_i))}{\|\nabla_x \phi^{y_i}(\hat{x}_i(t),\theta(t))\|_q} \nabla_\theta \phi^{y_i} (\hat{x}_i(t),\theta(t)) \cdot \theta'(t)$. We conclude that $\nabla_\theta h(R_{\theta}(x_i)) = \frac{h'(R_{\theta}(x_i))}{\|\nabla_x \phi^{y_i}(\hat{x}_i,\theta)\|_q} \nabla_\theta \phi^{y_i} (\hat{x}_i,\theta)$.
    
\end{proof}

\subsection*{Discussions on the assumptions}
Assumption~\ref{assumption: differentible} has several points that need to be explained further.

First, we only require that $\hat{x}_i(t)$ is a local closest boundary point. This is important because in practice when an algorithm for searching the closest boundary point is used (e.g. FAB), a local solution is the best one can hope for due to the non-convex nature of the optimization problem. When $\hat{x}_i(t)$ is a local solution, the speed should be interpreted as how fast the distance changes around that local solution. In this case, although $\hat{x}_i$ is not the globally closest adversarial example, the local speed around $\hat{x}_i$ still has much information on the relative movement of the decision boundary w.r.t. $x_i$, especially when the distance  $\|\hat{x}_i - x_i\|$ is relatively small and the input space is a high-dimensional space (e.g. pixel space). 

Second, we require that $\hat{x}_i(t)$ is a differentiable curve in time interval $I$. Note that if we constrain $\hat{x}_i(t)$ to be the global closest boundary point, $\hat{x}_i$ is unlikely to be differential for a large time interval $I$, especially in high dimensional space. This is because as the decision boundary moves. the closest point might switch from one point to another point that is very far away abruptly. Fortunately, this problem is alleviated because our derived closed-form expression still works when $\hat{x}_i(t)$ is a locally closest boundary point. Note that however, due to the \textit{topological change} of the decision boundary, it still can happen that $\hat{x}_i(t)$ stops existing (and thus stops being differentiable) at some time point, when typically the speed will go to infinity. From a mathematical point of view, this is related to \textit{shock} in partial differential equation (PDE) theories. More exploration on this is left to future work. In this work, we only consider the speed of the decision boundary at each discrete time step.

\newpage
\section{Computation of the exact and soft closest boundary point} \label{app: compute_bdr_exact_soft}

Either computing the speed of the decision boundary or using \algoName to directly optimize a function of margins requires the computation of the closest boundary point $\hat{x}$ (or the closest soft boundary point $\hat{x}^{\text{soft}}$), where we omit the subscript $i$ in this section. As discussed in Appendix~\ref{app: closed_expression_speed}, it suffices to find the locally closest (soft) boundary point in order for the closed-form expression~\ref{eq: expression_for_speed} and expression~\ref{eq: expression_for_grad_hR} for the speed and the gradient of margin to be valid.

\subsection{Closest boundary point} \label{app: compute_bdr_exact}

In this section, we will explain how to check the quality of the found $\hat{x}$ for the constrained optimization problem~\ref{eq: closest_pt} in practice. We will also give a simple analysis on how FAB\citep{croce2020minimally}, the algorithm we use in our implementation, solves the problem~\ref{eq: closest_pt} in practice. We include both $p=2$ and $p=\infty$ although in our work, only $p=\infty$ is used. We discuss both of them in order to highlight the difference in checking optimality conditions for smooth ($p=2$) and non-smooth norm ($p=\infty$).

The key points of analyzing $\hat{x}$ are that $\phi^{y}(\hat{x}) = 0$ and the KKT conditions of problem~\ref{eq: closest_pt}.

\emph{Case 1: $p=2$}

In this case, the KKT condition is given by $\nabla_x \phi^{y}(\hat{x}) + \lambda (\hat{x} - x) = 0$ for some $\lambda > 0$ (since $\phi^{y}(x) >0$). In other words, $\frac{\nabla_x \phi^{y}(\hat{x})}{\|\nabla_x \phi^{y}(\hat{x})\|} \cdot \frac{x-\hat{x}}{\|x-\hat{x}\|} = 1$. In practice, we check the following two conditions (a) $|\phi(\hat{x})| \leq 0.1$; (b) $\frac{\nabla_x \phi^{y}(\hat{x})}{\|\nabla_x \phi^{y}(\hat{x})\|} \cdot \frac{x-\hat{x}}{\|x-\hat{x}\|} > 0.8$. 
We observe in our experiments that FAB can find high-quality closest boundary points for over 90\% of the correctly classified data points. 

\emph{Case 2: $p=\infty$}

In this case, we consider the optimality condition given in Lemma~\ref{lemma: Linf_optimality_app} of Appendix~\ref{app: closed_expression_speed}. Denote $\#B$ the number of points in a set $B$. Using the notation $\mathcal{J} = \{j: |\hat{x}_j - x_j| = \|\hat{x}-x\|_\infty\}$ and $\mathcal{J}^{C}$ the complement set of $\mathcal{J}$, we check the following conditions in practice: (a) $|\phi(\hat{x})| \leq 0.1$; (b) $\frac{\#\{j\in\mathcal{J}: \nabla_x \phi^{y}(\hat{x})_j (\hat{x}_j - x_j) \leq 0\}}{\#\mathcal{J}} > 0.9$; (c) $\frac{\#\{k\notin J: |\nabla_x \phi^{y}(\hat{x})_k| < 0.1 \}}{\#\mathcal{J}^C} > 0.8$. Note the unlike $p=2$, the optimality conditions for $p=\infty$ are on each coordinate of $\hat{x}$, which is more difficult to satisfy in practice. We observe in our experiments that FAB with 100 iterations can find high-quality closest boundary points for about 85\% of the correctly classified points. However, when only 20 iterations are used, condition (3) is barely satisfied for all of the found boundary points (the first two conditions are still satisfied). 

In our visualizations of dynamics of the decision boundary for AT in Section~\ref{ssec: Dynamics of adversarial training}, we use 100 iterations for FAB and only use high-quality closest boundary points, so that the visualization results are relatively accurate. 

\subsection{Closest soft boundary point }\label{app: compute_bdr_soft}

\paragraph{Adapt FAB for soft decision boundary.} In \algoName, the closest point $\hat{x}^{\text{soft}}$ on the soft decision boundary is used. To find $\hat{x}^{\text{soft}}$, we adapt the FAB method. The original FAB method aims to find the closest point on the exact decision boundary. In particular, FAB forms linear approximations for decision boundary between the ground truth class and every other classes. The only adaptation we do on the FAB method is that now FAB only forms \emph{one} linear approximation for the soft decision boundary of the ground truth class. This is because we use the smoothed max operator in the soft logit margin, and there is no concept of the 'decision boundary between the ground truth class and another class' anymore. 

\paragraph{Computational efficiency.} By using the soft decision boundary, every iteration of FAB only requires one linear approximation of the soft decision boundary, which cost one back-propagation. In contrast, the original FAB which aims to find the closest boundary point on the exact decision boundary costs $K$ back-propagation at each iteration, where $K$ is the number of classes. Therefore, using the soft decision boundary is more efficient and is used in our proposed robust training method \algoName.

\paragraph{Local optimality condition.} The procedure of checking optimality condition is similar to the one in the last section.  Denote $\#B$ the number of points in a set $B$. Using the notation $\mathcal{J} = \{j: |\hat{x}_j - x_j| = \|\hat{x}-x\|_\infty\}$ and $\mathcal{J}^{C}$ the complement set of $\mathcal{J}$, we check the following conditions in practice: (a) $|\phi(\hat{x})| \leq 0.1$; (b) $\frac{\#\{j\in\mathcal{J}: \nabla_x \phi^{y}(\hat{x})_j (\hat{x}_j - x_j) \leq 0\}}{\#\mathcal{J}} > 0.9$; We find that when the temperature $\beta$ is relatively large (we use $\beta=5$ in all of our experiments) and 20 iterations is used, $95\%$ of the soft boundary point found for the correctly classified points satisfy these two conditions. During training, we only use these higher quality points and discard the rest of the boundary points that do not satisfy these two conditions. Note that we do not consider the third condition (c) $\frac{\#\{k\notin J: |\nabla_x \phi^{y}(\hat{x})_k| < 0.1 \}}{\#\mathcal{J}^C} > 0.8$. This is because condition (c) cannot be satisfied unless a very large iteration number is used, which is computationally prohibitive for robust training.

Experimentally \algoName achieves improved robustness over baseline methods, indicating that the closest soft boundary points used by \algoName are indeed useful for robust training. Designing faster and more reliable methods to solve the constrained optimization problem~\ref{eq: closest_pt} is left for future work.

\newpage
\section{Experiments}
In this section, we provide the details of experimental settings and further results of \algoName using various choices of hyperparameters. In addition, we provide experimental results when using additional data from the generated models. We also provide further analysis on the decision boundary dynamics.

\subsection{Detailed experimental settings} \label{app: exp_settings}

\paragraph{Architectures and training settings.} 
In all experiments on the CIFAR-10 dataset, we use the Wide Residual Network  \citep{zagoruyko2016wide} with depth 28 and width factor 10 (WRN-28-10) with Swish activation function \citep{ramachandran2017searching}. 
On the Tiny-ImageNet dataset, we use pre-activation ResNet-18 \citep{he2016identity}. 
In all experiments, we use stochastic weight averaging \citep{izmailov2018averaging} with a decay rate of $0.995$ as in prior work \citep{gowal2020uncovering,chen2020robust}. 
All models are trained using stochastic gradient descent with momentum 0.9 and weight decay 0.0005.
We use a cosine learning rate schedule \citep{loshchilov2016sgdr} without restarts where the initial learning rate is set to $0.1$ for baselines. 
To alleviate robust overfitting \citep{rice2020overfitting}, we compute the robust and clean accuracy at every epoch on a validation set of size 1024 using projected gradient descent (PGD) attacks with 20 steps using margin loss function. All experiments are run on NVIDIA GeForce RTX 2080 Ti GPU. 

\paragraph{Normalization layers} We consider two types of normalization layer in WRN-28-10 and ResNet-18, which are Batch Normalization (BN, used in their original architecture design) and Group Normalization (GN). When using GN, the decision boundaries are the same during training and evaluation, which is consistent with our theoretical analysis on the decision boundary dynamics. In the following sections, we will show the robustness performance on both cases: WRN-28-10 and ResNet-18 with BN and GN. We find that when applying \algoName on original WRN-28-10 and ResNet-18 with BN, gradient clipping needs to be applied in order to learn the BN parameters stably. We apply gradient clipping with norm threshold $0.1$ for experiments for CIFAR-10 on WRN-28-10 with BN and apply gradient clipping with norm threshold $1$ for Tiny-ImageNet on ResNet-18 with BN. For experiments on architectures with GN, we do not apply gradient clipping. Note that for all experiments of computing speed and margins for interpretation (Section \ref{sec: dynamics_of_bdr} and Section \ref{sec:exp_effect}), we use the ResNet-18 with GN.

\paragraph{Additional training settings}
For experiments with Group Normalization, models are run for 100 epochs on both datasets. 
For \algoName on Tiny-ImageNet, we use the cosine learning rate schedule with initial learning rate $0.05$ and on CIFAR-10, the learning rate begins at 0.1 and is decayed by a factor of 10 at the 50th and 75th epoch.
For experiments with Batch Normalization, models are run for 200 epochs on CIFAR-10 and 100 epochs on Tiny-ImageNet. For \algoName on both datasets, we use a cosine learning rate schedule \citep{loshchilov2016sgdr} without restarts where the initial learning rate is set to $0.1$, which is the same as the baselines.

\textbf{Compared baselines and their hyperparameters.}\quad
In all experiments we consider the $\ell_\infty$ perturbation setting. 
On CIFAR-10, the baseline defense methods include: 
(1) standard adversarial training (AT) \citep{madry2017towards} which trains on the worst case adversarial examples generated by 10-step PGD (PGD-10) on the cross-entropy loss. The perturbation bound is $\frac{8}{255}$and the step size of PGD is $\frac{2}{255}$; the training setting follows Rice et al.\footnote{\href{https://github.com/locuslab/robust_overfitting}{Robust Overfitting's Github}} \citep{rice2020overfitting}.
(2) TRADES \footnote{\href{https://github.com/yaodongyu/TRADES}{TRADES's Github}}\citep{zhang2019theoretically} which trades off between the clean and robust accuracy. The perturbation bound is $\frac{8}{255}$ with the step size of PGD-10 $0.007$. The regularization constant beta (or 1/lambda) is set to  6.
(3) MMA  \footnote{\href{https://github.com/BorealisAI/mma_training}{MMA's Github}}  \citep{Ding2020MMA} which trains on the closest adversarial examples (closest boundary points) with uniform weights. 
The MaxEps is set to $\frac{32}{255}$. 
(4) GAIRAT \footnote{\href{https://github.com/zjfheart/Geometry-aware-Instance-reweighted-Adversarial-Training}{GAIRAT's Github}} \citep{zhang2021geometryaware}  which reweights adversarial examples using the least perturbation steps. The perturbation bound is $\frac{8}{255}$ with step size $0.007$ using PGD-10 and the 'tanh' weight assignment function is used. 
(5) MAIL \footnote{\href{https://github.com/QizhouWang/MAIL}{MAIL's github}} \citep{liu2021probabilistic} which reweights adversarial examples using margin value. We choose its combination with TRADES (MAIL-TRADES) which provides better robustness performance than combining with AT (MAIL-AT). Its hyperparamters beta, bias and slope are set to $5.0, -1.5$ and $1.0$, respectively. 
(6) AWP \footnote{\href{https://github.com/csdongxian/AWP}{AWP's github}} \citep{wu2020adversarial} which adversarially perturbs both inputs and model parameters.
(7) FAT \footnote{\href{https://github.com/zjfheart/Friendly-Adversarial-Training}{FAT's github}} \citep{zhang2020attacks} that exploits friendly adversarial data, where the perturbation bound is set to $\frac{8}{255}$.
(8) MART~\citep{wang2019improving} which explicitly differentiates the mis-classified and correctly classified examples. 
On Tiny-ImageNet, we compare with AT, TRADES, and MART whose hyperparameter settings are available for this dataset. We follow the PyTorch implementation of \citep{gowal2020uncovering,rebuffi2021fixing} \footnote{\href{https://github.com/imrahulr/adversarial_robustness_pytorch}{UncoveringATLimits's Github}} for AT, TRADES and MART for both datasets.

\textbf{Evaluation details.}\quad
We evaluate \algoName and the baselines under $\ell_\infty$ norm constrained perturbations. The final robust accuracy is reported on AutoAttack (AA) \citep{croce2020reliable}, which uses an ensemble of selected strong attacks.
For all methods, we choose the hyperparameters to achieve the best robust accuracy under the commonly used perturbation bound 
$\epsilon=\frac{8}{255}$.
To fully compare the robustness performance among different methods, we report the robust accuracy 
under four additional perturbation bounds: $\frac{2}{255},\frac{4}{255},\frac{12}{255} $ and $\frac{16}{255}$.

\paragraph{Per-sample gradient} For computing the speed of the decision boundary in Section~\ref{ssec: Dynamics of adversarial training} and Section~\ref{sec:exp_effect}, we need to compute the per-sample gradient $\nabla_\theta \phi^{y_i}(\hat{x}_i,\theta)$ for every correctly classified point $x_i$. We use the Opacus package \citep{opacus} for computing per-sample gradients in parallel. Also, another reason why we replace BN with GN is because Opacus does not support BN for computing per-sample gradients. Although using this package will increase the memory usage, it is worth mentioning that during robust training, $\algoName$ does not need to compute the per-sample gradient and thus does not have the excessive memory issue. Per-sample gradients are only collected for computing speed, which is for interpretation of dynamics of different methods and not for robust training. 

\subsection{Hyperparameter Sensitivity Experiments}\label{app: further_exp_results}

In this section, we present the robustness performance of \algoName under different hyperparameter settings. We first show the results for architectures using Group Normalization (note that in Section \ref{sec:exp_robust} we use the original architectures using Batch Normalization) and analyze the effect of different hyperparameters. We then demonstrate more ablation experiments for architectures using Batch Normalization used in Section \ref{sec:exp_robust}.

\begin{table}[!htbp]
\centering
\noindent
\resizebox{\textwidth}{!}{\begin{tabular}{lSSSSSSS}
\centering
Defense & {Clean} & {$\epsilon=\frac{2}{255}$} & {$\epsilon=\frac{4}{255}$} & {$\epsilon=\frac{8}{255}$} & {$\epsilon=\frac{12}{255}$} & {$\epsilon=\frac{16}{255}$}  \\
\midrule
AT & {$85.36 \pm 0.17$} & {$77.16 \pm 0.29$} & {$67.84 \pm 0.24$} & {$46.27 \pm 0.19$} & {$26.62 \pm 0.18$}& {$12.40 \pm 0.12$}& \\
TRADES & {$84.67 \pm 0.24$} & {$77.72 \pm 0.18$}  & {$69.38 \pm 0.15$}& {$49.29 \pm 0.15$}  & {$30.25 \pm 0.17$}  & {$16.42 \pm 0.18$} & \\
MART & {$81.02 \pm 0.17$} & {$73.04 \pm 0.21$}  & {$64.94 \pm 0.22$}& {$48.06 \pm 0.20$}  & {$30.61 \pm 0.13$}  & {$16.42 \pm 0.09$} & \\ 
MMA & {$85.52 \pm 0.36$} & {$74.78 \pm 0.42$} & {$62.21 \pm 0.39$} & {$38.61 \pm 0.47$} &{$22.13 \pm 0.29$} & {$9.95 \pm 0.20$} & \\
GAIRAT & {$83.72 \pm 0.27$} & {$73.87 \pm 0.33$}& {$61.7 \pm 0.15$} & {$37.77 \pm 0.21$}  & {$18.87 \pm 0.15$} & {$8.1 \pm 0.11$} &\\
MAIL-TRADES & {$84.48 \pm 0.22$} & {$77.18 \pm 0.26$}  & {$68.20 \pm 0.31$}& {$48.64 \pm 0.12$}  & {$29.87 \pm 0.11$}  & {$15.62 \pm 0.16$} & \\
FAT-TRADES & {$\boldsymbol{86.58} \pm 0.25$} & {$\boldsymbol{78.96} \pm 0.17$}  & {$69.54 \pm 0.12$}& {$48.07 \pm 0.19$}  & {$27.66 \pm 0.12$}  & {$13.22 \pm 0.23$} & \\
\midrule
\algoName & {$85.64 \pm 0.10$} & {$78.20 \pm 0.16$}  & {$\boldsymbol{69.59} \pm 0.19$} & {$\boldsymbol{50.03} \pm 0.16$} & {$\boldsymbol{30.87} \pm 0.20$} & {$\boldsymbol{16.55} \pm 0.12$} & \\
\bottomrule
\end{tabular}}
\caption{{Clean and robust accuracy on CIFAR-10 under AA with different perturbation sizes on WRN-28-10 with Group Normalization. The hyperparameters for \algoName is $\alpha=8,r_0=\frac{16}{255},\lambda=400$.} }
\label{table: performance_Linf_cifar10_GN}
\end{table}
\begin{table}[!htbp]
\centering
\noindent
\resizebox{\textwidth}{!}{\begin{tabular}{lSSSSSSS}
\centering
Defense & {Clean} & {$\epsilon=\frac{2}{255}$} & {$\epsilon=\frac{4}{255}$} & {$\epsilon=\frac{8}{255}$} & {$\epsilon=\frac{12}{255}$} & {$\epsilon=\frac{16}{255}$}  \\
\midrule
AT & {$43.76 \pm 0.53$} & {$35.54 \pm 0.36$} & {$28.20 \pm 0.21$} & {$16.92 \pm 0.24$} & {$9.34 \pm 0.18$}& {$4.75 \pm 0.14$}& \\
TRADES & {$46.56 \pm 0.29$} & {$37.23 \pm 0.17$}  & {$28.68 \pm 0.19$}& {$16.20 \pm 0.21$}  & {$8.38 \pm 0.10$}  & {$4.23\pm 0.06$} & \\
MART & {$38.74 \pm 0.42$} & {$32.18 \pm 0.74$} & {$26.08 \pm 0.31$} & {$16.90 \pm 0.26$} &{$10.14 \pm 0.22$} & {$\boldsymbol{6.10} \pm 0.19$} & \\
\midrule
\algoName & {$\boldsymbol{47.67} \pm 0.15$} & {$\boldsymbol{38.19} \pm 0.18$}  & {$\boldsymbol{29.59} \pm 0.14$} & {$\boldsymbol{17.79} \pm 0.18$} & {$\boldsymbol{10.24} \pm 0.13$} & {$5.41 \pm 0.11$} & \\
\bottomrule
\end{tabular}}
\caption{{Clean and robust accuracy on Tiny-ImageNet under AA with different perturbation sizes on ResNet-18 with Group Normalization. The hyperparameters for \algoName is $\alpha=3, r_0=\frac{20}{255},\lambda=500$}. }
\label{table: performance_Linf_ImgeNet200_GN}
\end{table}

\paragraph{Overall performance of \algoName on architectures with GN} In Table \ref{table: performance_Linf_cifar10_GN} and Table \ref{table: performance_Linf_ImgeNet200_GN}, the overall comparison between \algoName and baselines are demonstrated. 
Overall on both datasets, under four out of five perturbation bounds, \algoName achieves the best robustness performance. This indicates the superiority of \algoName in increasing margins. 
\textbf{(1)} Specifically, on CIFAR-10, \algoName achieves the highest clean accuracy as well as robust accuracy under all perturbation bounds among all baselines except FAT-TRADES. 
(1a) Since FAT-TRADES prevents the model from learning on highly adversarial data in order to keep clean accuracy high, it achieves the best clean accuracy and robustness under a very small perturbation bound $\frac{2}{255}$. However, its performance on larger perturbation bounds is inadequate. 
(1b) Thanks to directly operating on margins in the input space and encourage robustness improvement on points with smaller margins, \algoName performs better than GAIRAT and MAIL-TRADES which use indirect approximations of the margins. 
\textbf{(2)} On Tiny-ImageNet, \algoName achieves the best clean accuracy and the best robust accuracy under all perturbation bounds except the largest $\frac{16}{255}$. 
Although MART is the most robust under $\frac{16}{255}$, it has much lower clean accuracy ($8.93\%$ lower than $\algoName$) and worse robustness under smaller perturbation bounds. 

\paragraph{Hyperparameters of \algoName} In this paper, we use the cost function of the form $h(R) = \frac{1}{\alpha}\exp(-\alpha R)$ when $R < r_0$ and $h(R) = 0$ otherwise. We present results under different decay strengh $\alpha>0$, margin threshold $r_0$ as well as regularization constant $\lambda$ for the robustness loss.  

\textbf{Performance results.}\quad 
The evaluation results on CIFAR-10 and Tiny-ImageNet with Group Normalization are shown in Table \ref{table: performance_Linf_cifar10_GN_more} and Table \ref{table: performance_Linf_ImgeNet200_GN_more}, respectively. 
We analyze the effects of hyperparameters as follows. 

\textbf{(1)} Effect of $\alpha$: Larger $\alpha$ corresponds to a cost function $h(\cdot)$ that decays faster, and therefore prioritize improvement on even smaller margins. Therefore, it should be expected that larger $\alpha$ leads to higher clean accuracy and higher robust accuracy under smaller perturbation sizes, and results in lower robust accuracy under larger perturbation sizes. For example, on CIFAR-10, when $\alpha = 5$ is increased to $\alpha=8$ when $r_0=\frac{16}{255}, \lambda=400$, the clean accuracy as well as the robust accuracy under $\epsilon =\frac{2}{255}, \frac{4}{255}$ and $\frac{8}{255}$ improves, while the robust accuracy under larger $\epsilon$ gets lower. The same patterns can also be observed on Tiny-ImageNet, for example, when $\alpha=8$ is increased to $\alpha=10$ when $r_0=\frac{20}{255}, \lambda=1000$. 

\textbf{(2)} Effect of $r_0$: $r_0$ is from preventing \algoName from training boundary points that are too far away from clean data points. Therefore, it should be expected that training on smaller $r_0$ tends to increase the clean accuracy and the robust accuracy under relatively small perturbation sizes. Indeed, on Tiny-ImageNet, when $r_0 = \frac{24}{255}$ is decreased to $r_0=\frac{20}{255}$ when $\alpha=10$ and $\lambda=1000$, we can observe that the clean accuracy as well as robust accuracy under $\epsilon=\frac{2}{255}$ increases but the robust accuracy under larger perturbation sizes $\epsilon=\frac{12}{255}$ and $\frac{16}{255}$ decreases.

\textbf{(3)}: Effect of robust loss constant $\lambda$: A larger $\lambda$ tends to increase the robustness of the model (in particular, the robust accuracy under relatively larger perturbation sizes) while decrease the clean accuracy and the robust accuracy under relatively small perturbation sizes. For example, on Tiny-ImageNet, when $\lambda = 800$ is increased to $\lambda=1000$ when $\alpha=10$ and $r_0=\frac{20}{255}$, the clean accuracy and robust accuracy under relatively small $\epsilon = \frac{2}{255}, \frac{4}{255}$ drops but the robust accuracy under larger perturbation sizes increase.

\begin{table}[!htbp]
\centering
\noindent
\renewcommand{\arraystretch}{1.5}
\resizebox{\textwidth}{!}{\begin{tabular}{lSSSSSSS}
\centering
Defense & {Clean} & {$\epsilon=\frac{2}{255}$} & {$\epsilon=\frac{4}{255}$} & {$\epsilon=\frac{8}{255}$} & {$\epsilon=\frac{12}{255}$} & {$\epsilon=\frac{16}{255}$}  \\
\midrule
AT & {$85.36 \pm 0.17$} & {$77.16 \pm 0.29$} & {$67.84 \pm 0.24$} & {$46.27 \pm 0.19$} & {$26.62 \pm 0.18$}& {$12.40 \pm 0.12$}& \\
TRADES & {$84.67 \pm 0.24$} & {$77.72 \pm 0.18$}  & {$69.38 \pm 0.15$}& {$49.29 \pm 0.15$}  & {$30.25 \pm 0.17$}  & {$16.42 \pm 0.18$} & \\
MART & {$81.02 \pm 0.17$} & {$73.04 \pm 0.21$}  & {$64.94 \pm 0.22$}& {$48.06 \pm 0.20$}  & {$30.61 \pm 0.13$}  & {$16.42 \pm 0.09$} & \\ 
MMA & {$85.52 \pm 0.36$} & {$74.78 \pm 0.42$} & {$62.21 \pm 0.39$} & {$38.61 \pm 0.47$} &{$22.13 \pm 0.29$} & {$9.95 \pm 0.20$} & \\
GAIRAT & {$83.72 \pm 0.27$} & {$73.87 \pm 0.33$}& {$61.7 \pm 0.15$} & {$37.77 \pm 0.21$}  & {$18.87 \pm 0.15$} & {$8.1 \pm 0.11$} &\\
MAIL-TRADES & {$84.48 \pm 0.22$} & {$77.18 \pm 0.26$}  & {$68.20 \pm 0.31$}& {$48.64 \pm 0.12$}  & {$29.87 \pm 0.11$}  & {$15.62 \pm 0.16$} & \\
FAT-TRADES & {$\boldsymbol{86.58} \pm 0.25$} & {$\boldsymbol{78.96} \pm 0.17$}  & {$69.54 \pm 0.12$}& {$48.07 \pm 0.19$}  & {$27.66 \pm 0.12$}  & {$13.22 \pm 0.23$} & \\
\midrule
$\alpha = 10, r_0 =\frac{20}{255}, \lambda=400$ & {$84.17 \pm 0.12$} & {$76.85 \pm 0.15$}  & {$68.47 \pm 0.15$} & {$49.41 \pm 0.20$} & {$32.07 \pm 0.17$} & {$18.72 \pm 0.10$} & \\
$\alpha = 5, r_0 =\frac{20}{255}, \lambda=300$ & {$83.33 \pm 0.19$} & {$75.96 \pm 0.24$}  & {$67.82 \pm 0.22$} & {$49.55 \pm 0.24$} & {$\boldsymbol{32.29} \pm 0.14$} & {$\boldsymbol{19.16} \pm 0.15$} & \\
$\alpha = 8, r_0 =\frac{16}{255}, \lambda=400$ & {$85.64 \pm 0.10$} & {$78.20 \pm 0.16$}  & {$\boldsymbol{69.59} \pm 0.19$} & {$\boldsymbol{50.03} \pm 0.16$} & {$30.87 \pm 0.20$} & {$16.55 \pm 0.12$} & \\
$\alpha = 5, r_0 =\frac{16}{255}, \lambda=400$ & {$85.05 \pm 0.14$} & {$77.92 \pm 0.21$}  & {$69.00 \pm 0.14$} & {$49.60 \pm 0.12$} & {$30.78 \pm 0.14$} & {$17.06 \pm 0.09$} & \\
$\alpha = 0, r_0 =\frac{16}{255}, \lambda=400$ & {$83.85 \pm 0.23$} & {$76.77 \pm 0.20$}  & {$68.26 \pm 0.15$} & {$49.65 \pm 0.18$} & {$31.72 \pm 0.21$} & {$17.76 \pm 0.18$} & \\
\bottomrule
\end{tabular}}
\caption{Clean and robust accuracy on CIFAR-10 under AA with different perturbation bounds on WRN-28-10 with Group Normalization. The results on different sets of hyperparameters for \algoName starts from the eighth row.}
\label{table: performance_Linf_cifar10_GN_more}
\end{table}
\begin{table}[!htbp]
\centering
\noindent
\renewcommand{\arraystretch}{1.5}
\resizebox{\textwidth}{!}{\begin{tabular}{lSSSSSSS}
\centering
Defense & {Clean} & {$\epsilon=\frac{2}{255}$} & {$\epsilon=\frac{4}{255}$} & {$\epsilon=\frac{8}{255}$} & {$\epsilon=\frac{12}{255}$} & {$\epsilon=\frac{16}{255}$}  \\
\midrule
AT & {$43.76 \pm 0.53$} & {$35.54 \pm 0.36$} & {$28.20 \pm 0.21$} & {$16.92 \pm 0.24$} & {$9.34 \pm 0.18$}& {$4.75 \pm 0.14$}& \\
TRADES & {$46.56 \pm 0.29$} & {$37.23 \pm 0.17$}  & {$28.68 \pm 0.19$}& {$16.20 \pm 0.21$}  & {$8.38 \pm 0.10$}  & {$4.23\pm 0.06$} & \\
MART & {$38.74 \pm 0.42$} & {$32.18 \pm 0.74$} & {$26.08 \pm 0.31$} & {$16.90 \pm 0.26$} &{$10.14 \pm 0.22$} & {$\boldsymbol{6.10} \pm 0.19$} & \\
\midrule
$\alpha = 10, r_0 =\frac{32}{255}, \lambda=500$ & {$\boldsymbol{48.98} \pm 0.24$} & {$\boldsymbol{38.38} \pm 0.32$}  & {$\boldsymbol{29.76} \pm 0.22$} & {$17.30 \pm 0.19$} & {$9.87 \pm 0.11$} & {$5.19 \pm 0.15$} & \\
$\alpha = 10, r_0 =\frac{24}{255}, \lambda=1000$ & {$45.27 \pm 0.19$} & {$36.58 \pm 0.52$}  & {$29.03 \pm 0.32$} & {$17.35 \pm 0.24$} & {$10.03 \pm 0.18$} & {$5.61 \pm 0.16$} & \\
$\alpha = 10, r_0 =\frac{20}{255}, \lambda=1000$ & {$46.37 \pm 0.26$} & {$37.43 \pm 0.32$}  & {$29.01 \pm 0.19$} & {$17.61 \pm 0.20$} & {$9.91 \pm 0.18$} & {$5.27 \pm 0.14$} & \\
$\alpha = 10, r_0 =\frac{20}{255}, \lambda=800$ & {$47.09 \pm 0.22$} & {$38.04 \pm 0.12$}  & {$29.55 \pm 0.17$} & {$17.22 \pm 0.15$} & {$9.59 \pm 0.20$} & {$5.08 \pm 0.11$} & \\
$\alpha = 8, r_0 =\frac{20}{255}, \lambda=1000$ & {$45.69 \pm 0.17$} & {$36.74 \pm 0.20$}  & {$28.57 \pm 0.12$} & {$17.31 \pm 0.21$} & {$10.13 \pm 0.15$} & {$5.19 \pm 0.16$} & \\
$\alpha = 5, r_0 =\frac{20}{255}, \lambda=800$ & {$45.61 \pm 0.14$} & {$36.87 \pm 0.16$}  & {$29.01 \pm 0.19$} & {$17.58 \pm 0.10$} & {$\boldsymbol{10.38} \pm 0.16$} & {$5.33 \pm 0.09$} & \\
$\alpha = 3, r_0 =\frac{20}{255}, \lambda=500$ & {$47.67 \pm 0.15$} & {$38.19 \pm 0.18$}  & {$29.59 \pm 0.14$} & {$\boldsymbol{17.79} \pm 0.18$} & {$10.24 \pm 0.13$} & {$5.41 \pm 0.11$} & \\
$\alpha = 3, r_0 =\frac{16}{255}, \lambda=1000$ & {$45.27 \pm 0.20$} & {$36.71 \pm 0.16$}  & {$28.74 \pm 0.20$} & {$17.40 \pm 0.16$} & {$9.80 \pm 0.13$} & {$4.92 \pm 0.13$} & \\
\bottomrule
\end{tabular}}
\caption{Clean and robust accuracy on Tiny-ImageNet under AA with different perturbation bounds on ResNet-18 with Group Normalization. The results on different sets of hyperparameters for \algoName starts from the fourth row.} 
\label{table: performance_Linf_ImgeNet200_GN_more}
\end{table}

\textbf{(4)} Effect of the burn-in period: A burn-in period of natural training is necessary for \algoName since its robust loss function depends on the closest boundary points, which can only be found on correctly classified points. That is, \algoName requires a descent initial clean accuracy. In our experiments, we find that the learning rate of the burn-in period is important: \algoName will train successfully if the learning rate of the burn-in period is relatively large (e.g. 0.1 for CIFAR-10 and Tiny-ImageNet). However, when the learning rate is small (such as $0.001$), \algoName sometimes drives the clean accuracy to be very low at first, and fails to train. Our suggestion is to use a larger learning rate to obtain a naturally pretrained model. 

\paragraph{More ablation on architectures with BN} In Table \ref{table: performance_Linf_cifar10_BN_more} and Table \ref{table: performance_Linf_ImgeNet200_BN_more}, we demonstrate results for more hyperparamter settings for experiments with Batch Normalization in Section \ref{sec:exp_robust}. The role of each hyperparameter is similar to  the GN case. 

\begin{table}[!htbp]
\centering
\noindent
\renewcommand{\arraystretch}{1.5}
\resizebox{\textwidth}{!}{\begin{tabular}{lSSSSSSS}
\centering
Defense & {Clean} & {$\epsilon=\frac{2}{255}$} & {$\epsilon=\frac{4}{255}$} & {$\epsilon=\frac{8}{255}$} & {$\epsilon=\frac{12}{255}$} & {$\epsilon=\frac{16}{255}$}  \\
\midrule
AT & {$85.65 \pm 0.25$} & {$79.08 \pm 0.12$} & {$71.24 \pm 0.28$} & {$53.20\pm 0.16$} & {$32.94 \pm 0.32$}& {$16.12 \pm 0.23$}& \\
TRADES & {$82.92 \pm 0.30$} & {$77.69 \pm 0.16$}  & {$70.68 \pm 0.15$}& {$54.28 \pm 0.19$}  & {$36.65 \pm 0.24$}  & {$21.59 \pm 0.31$} & \\
MART & {$83.37 \pm 0.25$} & {$76.58 \pm 0.24$}  & {$70.19 \pm 0.18$}& {$52.91 \pm 0.24$}  & {$35.16 \pm 0.13$}  & {$18.80 \pm 0.14$} & \\ 
MMA & {$83.22 \pm 0.38$} & {$74.24 \pm 0.52$} & {$64.42 \pm 0.29$} & {$44.02 \pm 0.33$} &{$26.45 \pm 0.21$} & {$13.78 \pm 0.25$} & \\
GAIRAT & {$\boldsymbol{86.59} \pm 0.31$} & {$76.72 \pm 0.28$}& {$64.64 \pm 0.25$} & {$38.16 \pm 0.32$}  & {$19.01 \pm 0.18$} & {$7.55 \pm 0.17$} &\\
AWP & {$84.27 \pm 0.19$} & {$78.33 \pm 0.21$}& {$70.82 \pm 0.26$} & {$53.92 \pm 0.17$}  & {$35.24 \pm 0.26$} & {$20.40 \pm 0.14$} &\\

\midrule
$\alpha = 0, r_0 =\frac{16}{255}, \lambda=1000$ & {$85.10 \pm 0.24$} & {$78.68  \pm 0.18$}  & {$71.67  \pm 0.28$} & {$54.78  \pm 0.21$} & {$36.26 \pm 0.24$} & {$21.55 \pm 0.16$} & \\
$\alpha = 3, r_0 =\frac{16}{255}, \lambda=1000$ & {$85.55 \pm 0.24$} & {$79.21 \pm 0.14$}  & {$71.73 \pm 0.18$} & {$54.69 \pm 0.14$} & {$35.74 \pm 0.25$} & {$20.79 \pm 0.18$} & \\
$\alpha = 3, r_0 =\frac{16}{255}, \lambda=1500$ & {$85.34 \pm 0.19$} & {$78.97 \pm 0.21$}  & {$71.82 \pm 0.27$} & {$54.39 \pm 0.17$} & {$35.94 \pm 0.13$} & {$20.83 \pm 0.19$} & \\
$\alpha = 8, r_0 =\frac{16}{255}, \lambda=1000$ & {$86.36 \pm 0.32$} & {$79.84 \pm 0.25$}  & {$\boldsymbol{72.29} \pm 0.29$} & {$53.93 \pm 0.14$} & {$35.06 \pm 0.22$} & {$20.08 \pm 0.11$} & \\
$\alpha = 8, r_0 =\frac{16}{255}, \lambda=2000$ & {$86.10 \pm 0.15$} & {$79.33 \pm 0.22$}  & {$72.04 \pm 0.32$} & {$54.38 \pm 0.19$} & {$35.36 \pm 0.27$} & {$20.68 \pm 0.14$} & \\
$\alpha = 8, r_0 =\frac{16}{255}, \lambda=3000$ & {$86.05 \pm 0.27$} & {$\boldsymbol{79.64} \pm 0.25$}  & {$72.24 \pm 0.28$} & {$54.24 \pm 0.17$} & {$35.65 \pm 0.21$} & {$20.52 \pm 0.18$} & \\
$\alpha = 0, r_0 =\frac{20}{255}, \lambda=500$ & {$83.64 \pm 0.24$} & {$77.28 \pm 0.31$}  & {$70.11 \pm 0.21$} & {$54.21 \pm 0.26$} & {$37.75 \pm 0.21$} & {$23.85 \pm 0.20$} & \\
$\alpha = 0, r_0 =\frac{20}{255}, \lambda=1000$ & {$82.23 \pm 0.12$} & {$76.20 \pm 0.24$}  & {$69.48 \pm 0.26$} & {$\boldsymbol{54.82} \pm 0.25$} & {$\boldsymbol{38.47} \pm 0.21$} & {$24.80 \pm 0.20$} & \\
$\alpha = 3, r_0 =\frac{20}{255}, \lambda=400$ & {$84.51 \pm 0.08$} & {$78.26 \pm 0.14$}  & {$70.65 \pm 0.18$} & {$54.24 \pm 0.15$} & {$37.28 \pm 0.15$} & {$22.95 \pm 0.18$} & \\
$\alpha = 3, r_0 =\frac{20}{255}, \lambda=800$ & {$83.56 \pm 0.20$} & {$77.44 \pm 0.24$}  & {$70.25 \pm 0.28$} & {$54.22 \pm 0.23$} & {$37.85 \pm 0.18$} & {$23.88 \pm 0.17$} & \\
$\alpha = 5, r_0 =\frac{20}{255}, \lambda=1000$ & {$83.94 \pm 0.12$} & {$77.79 \pm 0.34$}  & {$70.77 \pm 0.28$} & {$54.39 \pm 0.27$} & {$37.59 \pm 0.19$} & {$23.61 \pm 0.23$} & \\
$\alpha = 5, r_0 =\frac{32}{255}, \lambda=500$ & {$81.52 \pm 0.31$} & {$75.58 \pm 0.21$}  & {$68.03 \pm 0.24$} & {$53.50 \pm 0.17$} & {$38.41 \pm 0.17$} & {$\boldsymbol{26.30} \pm 0.14$} & \\
\bottomrule
\end{tabular}}
\caption{Clean and robust accuracy on CIFAR-10 under AA with different perturbation bounds on WRN-28-10 (with its original Batch Normalization layer). The results on different sets of hyperparameters for \algoName starts from the seventh row.}
\label{table: performance_Linf_cifar10_BN_more}
\end{table}
\begin{table}[!htbp]
\centering
\noindent
\renewcommand{\arraystretch}{1.5}
\resizebox{\textwidth}{!}{\begin{tabular}{lSSSSSSS}
\centering
Defense & {Clean} & {$\epsilon=\frac{2}{255}$} & {$\epsilon=\frac{4}{255}$} & {$\epsilon=\frac{8}{255}$} & {$\epsilon=\frac{12}{255}$} & {$\epsilon=\frac{16}{255}$}  \\
\midrule
AT & {$48.09 \pm 0.38$} & {$38.82 \pm 0.26$} & {$30.18 \pm 0.27$} & {$16.46 \pm 0.19$} & {$7.74 \pm 0.20$}& {$3.05 \pm 0.17$}& \\
TRADES & {$46.68 \pm 0.30$} & {$37.84 \pm 0.21$}  & {$29.85 \pm 0.19$}& {$16.76 \pm 0.17$}  & {$8.97 \pm 0.23$}  & {$4.43\pm 0.11$} & \\
MART & {$45.51 \pm 0.29$} & {$36.68 \pm 0.34$} & {$29.15 \pm 0.25$} & {$17.79 \pm 0.15$} &{$9.91 \pm 0.17$} & {$5.31 \pm 0.17$} & \\
\midrule
$\alpha = 10, r_0 =\frac{32}{255}, \lambda=500$ & {$48.98 \pm 0.33$} & {$38.38 \pm 0.26$}  & {$29.76 \pm 0.21$} & {$17.30 \pm 0.23$} & {$9.87 \pm 0.18$} & {$5.19 \pm 0.12$} & \\
$\alpha = 0, r_0 =\frac{16}{255}, \lambda=1000$ & {$46.49 \pm 0.25$} & {$37.60 \pm 0.24$}  & {$29.17 \pm 0.16$} & {$17.0 \pm 0.19$} & {$9.02 \pm 0.15$} & {$4.59 \pm 0.14$} & \\
$\alpha = 0, r_0 =\frac{20}{255}, \lambda=500$ & {$49.17 \pm 0.21$} & {$39.53 \pm 0.24$}  & {$30.20 \pm 0.22$} & {$17.15 \pm 0.27$} & {$9.08 \pm 0.10$} & {$4.87 \pm 0.07$} & \\
$\alpha = 0, r_0 =\frac{20}{255}, \lambda=1000$ & {$43.9 \pm 0.19$} & {$36.23 \pm 0.23$}  & {$28.89 \pm 0.20$} & {$17.47 \pm 0.19$} & {$10.01 \pm 0.19$} & {$5.53 \pm 0.11$} & \\
$\alpha = 0, r_0 =\frac{20}{255}, \lambda=2000$ & {$42.21 \pm 0.28$} & {$34.79 \pm 0.19$}  & {$27.66 \pm 0.21$} & {$16.55 \pm 0.34$} & {$9.43 \pm 0.19$} & {$5.01 \pm 0.16$} & \\
$\alpha = 0, r_0 =\frac{24}{255}, \lambda=500$ & {$48.27 \pm 0.25$} & {$38.75 \pm 0.20$}  & {$30.02 \pm 0.16$} & {$18.00 \pm 0.18$} & {$10.15 \pm 0.18$} & {$5.56 \pm 0.08$} & \\
$\alpha = 5, r_0 =\frac{24}{255}, \lambda=500$ & {$\boldsymbol{50.86} \pm 0.28$} & {$\boldsymbol{39.81} \pm 0.18$}  & {$30.63 \pm 0.19$} & {$17.20 \pm 0.25$} & {$9.36 \pm 0.11$} & {$4.99 \pm 0.13$} & \\
$\alpha = 5, r_0 =\frac{24}{255}, \lambda=1000$ & {$44.59 \pm 0.32$} & {$36.3 \pm 0.30$}  & {$28.6 \pm 0.27$} & {$17.28 \pm 0.16$} & {$10.16 \pm 0.17$} & {$5.76 \pm 0.06$} & \\
$\alpha = 0, r_0 =\frac{32}{255}, \lambda=500$ & {$46.19 \pm 0.22$} & {$37.64 \pm 0.30$}  & {$29.49 \pm 0.21$} & {$\boldsymbol{18.05} \pm 0.15$} & {$\boldsymbol{10.66} \pm 0.13$} & {$\boldsymbol{6.27} \pm 0.09$} & \\
$\alpha = 3, r_0 =\frac{32}{255}, \lambda=500$ & {$48.56 \pm 0.20$} & {$39.32 \pm 0.23$}  & {$30.22 \pm 0.19$} & {$17.93 \pm 0.15$} & {$10.22 \pm 0.15$} & {$5.84 \pm 0.13$} & \\
$\alpha = 5, r_0 =\frac{32}{255}, \lambda=500$ & {$49.71 \pm 0.18$} & {$39.30 \pm 0.14$}  & {$\boldsymbol{30.69} \pm 0.21$} & {$18.02 \pm 0.18$} & {$10.08 \pm 0.09$} & {$5.65 \pm 0.12$} & \\
$\alpha = 5, r_0 =\frac{32}{255}, \lambda=800$ & {$45.90 \pm 0.34$} & {$37.68 \pm 0.25$}  & {$29.53 \pm 0.27$} & {$17.96 \pm 0.20$} & {$10.55 \pm 0.15$} & {$6.07 \pm 0.14$} & \\
\bottomrule
\end{tabular}}
\caption{Clean and robust accuracy on Tiny-ImageNet under AA with different perturbation bounds on ResNet-18 (with its original Batch Normalization layer). The results on different sets of hyperparameters for \algoName starts from the fourth row.} 
\label{table: performance_Linf_ImgeNet200_BN_more}
\end{table}

\newpage
\subsection{Experimental results with additional data} \label{app: aux_data}
Recent work shows that generative models which are trained solely on the original training data can be used to drastically improve the adversarial robustness performance \citep{rebuffi2021fixing,gowal2021improving,wang2023better}. In this section, we demonstrate the results of \algoName on CIFAR-10 using 10M additional data from a recent diffusion model \citep{karras2022elucidating, wang2023better}. The experimental setting follows previous works \citep{rebuffi2021fixing,gowal2020uncovering} and their PyTorch implementation \footnote{\href{https://github.com/imrahulr/adversarial_robustness_pytorch}{UncoveringATLimits's Github}}, which is consistent with Section \ref{app: exp_settings} except that now we run 1200 epochs with batch size 1024 using the additional data. The initial learning rate is still set to $0.1$. Note that in the original implementation in recent works \citep{rebuffi2021fixing,gowal2020uncovering, wang2023better}, more epochs are run with possibly larger batch size and larger initial learning rate $0.4$, which could improve the performance compared with 1200 epochs and 1024 batch size with initial learning rate $0.1$ that we use for \algoName. 

\paragraph{Hyperparameters \algoName with additional data} We use WRN-28-10 (with its original batch normalization layers) and we choose $\alpha=3, r_0 = \frac{16}{255}, \lambda = 800, \beta=5$ and apply gradient clipping with threshold $0.1$. We use 10 epochs of natural training as the burn-in period. Note that we do not further tune the hyperparameters due to limited computational resources, and it is very likely that further tuning will lead to better robustness performance.

\paragraph{Performance of \algoName with additional data} The experimental results are shown in Table \ref{table: performance_Linf_cifar10_BN_AuxData}. Compared with the results in Table \ref{table: performance_Linf_cifar10_BN_more}, it is clear that the additional data can drastically improve the robust accuracy of \algoName (about $9\%$ boost in robust accuracy under $\epsilon=\frac{8}{255}$ and $8\%$ boost in clean accuracy).

\begin{table}[!htbp]
\centering
\noindent
\resizebox{\textwidth}{!}{\begin{tabular}{lSSSSSSSS}
\centering
Defense & {Architecture}& {Clean} & {$\epsilon=\frac{2}{255}$} & {$\epsilon=\frac{4}{255}$} & {$\epsilon=\frac{8}{255}$} & {$\epsilon=\frac{12}{255}$} & {$\epsilon=\frac{16}{255}$}  \\
\midrule
\algoName & {WRN-28-10} & {$93.69$} & {$89.08$} & {$82.72$} & {$63.89$} & {$42.34$}& {$22.84$}& \\
\bottomrule
\end{tabular}}
\caption{{Clean and robust accuracy on CIFAR-10 under $l_\infty$ AutoAttack with different perturbation sizes when 10M additional generated data is used for training.} }
\label{table: performance_Linf_cifar10_BN_AuxData}
\end{table}

\subsection{Further analysis on Decision boundary dynamics}

In Section~\ref{ssec: Dynamics of adversarial training} and Section~\ref{sec:exp_effect} we have presented the dynamics of both AT and \algoName on the same pretrained models using the same batch of data at one iteration. In this section, we demonstrate the dynamics of the decision boundary throughout the whole training process. 

\textbf{Experiment setting.}\quad To study the decision boundary dynamics throughout the training process, we train a ResNet-18 \citep{he2016deep} with group normalization (GN) \citep{wu2018group} on CIFAR-10 using (1) Adversarial Training with 10-step PGD under $\ell_\infty$ perturbation with $\epsilon=\frac{8}{255}$ from scratch; (2) \algoName with $\alpha=8, \lambda=400, r_0 = \frac{16}{255}$ from a naturally pretrained model. The models are trained with a initial learning rate of $0.01$ and the learning rate decays to $0.001$ at 20000 iteration. At each iteration, we compute the proportion of negative speed among points with margins smaller than $\frac{8}{255}$ that are regarded as vulnerable.

\textbf{Conflicting dynamics throughout training}
In Figure \ref{fig app: Dyna whole training},
the clean and robust accuracy of both methods are shown, along with the proportion of negative speed among vulnerable points. We apply curve smoothing for negative speed proportion plot for better visualization. Note that we omit the initial part of training (first 10 epochs) since at this initial stage, there are not enough correctly classified data points but speed and margin are only defined for these points. We can see that both methods exhibit some degree of robust overfitting, where the training robust accuracy becomes larger than the test robust accuracy. In addition, the conflicting dynamics exists throughout the whole training process, since the proportion of negative speed is never zero. We can see that \algoName consistently has less conflicting dynamics than AT. Interestingly, the proportion of negative speed decreases over time during training for both methods. The connection between the decreasing degree of conflicting dynamics on the training data and the robust overfitting phenomenon is left for future research. 

\begin{figure}[!htbp]
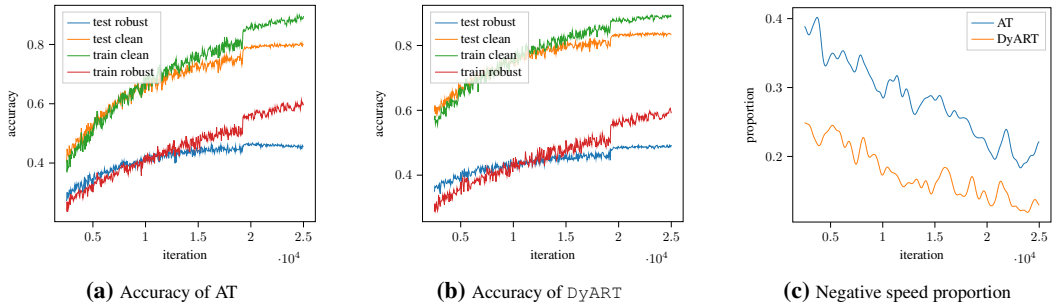

    \centering
    \begin{subfigure}[t]{0.3\textwidth}
        \centering
        \resizebox{\linewidth}{!}{\input{ICLR_appendix_experiments/AT_acc.tex}}
        \caption{\scriptsize{Accuracy of AT}}
    \end{subfigure}%
    \hfill
    \begin{subfigure}[t]{0.3\textwidth}
        \centering
        \resizebox{\linewidth}{!}{\input{ICLR_appendix_experiments/DyART_acc.tex}}
        \caption{\scriptsize{Accuracy of \algoName}}
    \end{subfigure}%
    \hfill
    \begin{subfigure}[t]{0.3\textwidth}
        \centering
        \resizebox{\linewidth}{!}{\input{ICLR_appendix_experiments/neg_ratio_wholeTraining.tex}}
        \caption{\scriptsize{Negative speed proportion
        }}
    \end{subfigure}%
    \caption{The accuracy of AT and \algoName as well as the proportion of negative speed among points whose margins are smaller than $\frac{8}{255}$.}
    \label{fig app: Dyna whole training}
\end{figure}

\subsection{Run time analysis}

In this section we provide the run time analysis. The main computational bottleneck for \algoName is finding the closest boundary points, which is an iterative algorithm adapted from FAB. Each iteration costs one back-propagation, which is the same as Projected Gradient Descent (PGD). Once we find these closest boundary point candidates, we check if the KKT condition is approximately satisfied and filter out points that do not meet the KKT condition. The computational cost of this step takes one back-propagation.

We use the torch.cuda.Event functionality in PyTorch to measure the execution time for one iteration of each method. In the case of \algoName, this means measuring the total time of finding the closest boundary points and do back propagation using the full loss function.
We use ResNet-18 with GroupNorm on a batch size of 128 on the CIFAR10 dataset. We use one NVDIA RTX A4000. The results are as follows:

\begin{itemize}
  \item Natural training: $46 \pm 0.9$ ms
  \item AT (PGD-10 on Cross-Entropy loss): $531 \pm 5.2$ ms
  \item TRADES (PGD-10 on KL divergence loss): $573 \pm 2.8$ ms
  \item DyART (10 steps for finding the closest boundary point):  $743 \pm 10.3$ ms
  \item DyART (20 steps for finding the closest boundary point): $1171 \pm 17.8$ ms
\end{itemize}

Developing faster algorithms for finding the closest boundary points is left for future research.


\newpage

\end{document}